\documentclass{article}

\PassOptionsToPackage{numbers,sort&compress}{natbib}

\usepackage[english]{babel}

\usepackage{microtype}
\usepackage{graphicx}
\usepackage{booktabs}
\usepackage{multirow}
\usepackage{arydshln}
\usepackage{dblfloatfix}
\usepackage{amsmath,amssymb,amsfonts,amsthm}
\usepackage{mathtools}
\usepackage[table]{xcolor}

\definecolor{rowhl}{RGB}{235,245,255} 
\definecolor{crblue}{RGB}{0,0,0}
\newcommand{\cradd}[1]{\textcolor{crblue}{#1}}

\usepackage[colorlinks=true, allcolors=blue]{hyperref}
\usepackage[capitalize,noabbrev]{cleveref}

\usepackage{algorithm}
\usepackage{algorithmic}
\usepackage{xurl}

\usepackage[accepted]{arxiv2026}

\setcitestyle{numbers,square,comma,sort&compress}

\makeatletter
\renewcommand{\ICML@appearing}{}
\renewcommand{\Notice@String}{}
\makeatother

\usepackage{float}

\newtheorem{proposition}{Proposition}
\newtheorem{lemma}{Lemma}

\usepackage{bm} 

\newcommand{\mat}[1]{\bm{#1}}   

\usepackage{enumitem}

\usepackage[font=footnotesize,labelfont=bf]{caption}

\icmltitlerunning{Permutation from Structure}

\begin{document}

\twocolumn[
\icmltitle{Learning Permutation from Structure Without Supervision}
\icmltitlerunning{Permutation from Structure}



\icmlsetsymbol{equal}{*}

\begin{icmlauthorlist}
\icmlauthor{Ran Eisenberg}{biu}
\icmlauthor{Ofir Lindenbaum}{biu}
\end{icmlauthorlist}

\icmlaffiliation{biu}{Faculty of Engineering, Bar-Ilan University, Ramat Gan, Israel}

\icmlcorrespondingauthor{Ran Eisenberg}{eisenbr2@biu.ac.il}
\icmlcorrespondingauthor{Ofir Lindenbaum}{ofir.lindenbaum@biu.ac.il}

\icmlkeywords{Machine Learning, Permutation Learning}

\vskip 0.3in
]



\printAffiliationsAndNotice{}  

\begin{abstract}
Many learning problems require uncovering a hidden ordering that reveals structure in unordered data, such as monotonicity in sorting or spatial continuity in jigsaw reconstruction. In these settings, permutations can be learned as latent operators by optimizing objectives defined directly on the reordered output, often without access to ground-truth orderings. Differentiable relaxations such as Gumbel–Sinkhorn make this approach practical by approximating permutation matrices with doubly stochastic matrices. However, learning from structure without supervision induces a non-uniform uncertainty: some assignments become confident early, while others remain ambiguous. Existing methods control this process using a single global temperature, forcing all assignments to sharpen or diffuse simultaneously and leading to instability at scale. We introduce an entropy-adaptive formulation of Gumbel–Sinkhorn that locally modulates temperature based on assignment uncertainty. This allows confident assignments to discretize early while preserving exploration where uncertainty remains. Across sorting and jigsaw reconstruction tasks and in routing-style settings, adaptive entropy control improves training stability and final permutation quality relative to fixed-temperature baselines, particularly as problem size and assignment ambiguity increase.
\end{abstract}

\section{Introduction}

Many tasks involve data that are naturally unordered at observation time and can be addressed as recovering a hidden permutation. For example, sorting requires reordering elements into a monotonic sequence~\cite{grover2018stochastic,prillo2020softsort,cuturi2019differentiable,blondel2020fast}, while jigsaw reconstruction seeks a spatial arrangement that produces visual continuity~\cite{santacruz2017deeppermnet,mena2018gumbel_sinkhorn}; ranking, matching, and routing problems similarly reduce to finding an ordering that reveals latent structure~\cite{emami2018sinkhorn_pg,min2023unsupervised}. In these settings, structure emerges only after the data are correctly ordered, suggesting a natural unsupervised formulation when ground-truth permutations are unavailable. Rather than predicting permutations as labels, they can be treated as operators applied to data, with learning driven by whether the reordered output satisfies task-specific structural properties such as smoothness, monotonicity, or geometric coherence.
Permutation variables have also been used as latent operators in representation learning; for example, COPER learns correlation-based permutations across views to support end-to-end multi-view clustering~\cite{eisenberg2025coper}.

Recent advances in differentiable permutation learning make this operator-based view practically usable.
Continuous relaxations approximate permutation matrices by doubly stochastic matrices and enable end-to-end optimization via Sinkhorn normalization and related operators~\cite{santacruz2017deeppermnet,mena2018gumbel_sinkhorn,cuturi2013sinkhorn}.
To encourage convergence toward discrete permutations, these methods rely on a temperature parameter that controls the entropy of the relaxation and is typically annealed during training~\cite{mena2018gumbel_sinkhorn,emami2018sinkhorn_pg,guo2024ot4p}.

Despite their success on small or strongly constrained problems, such approaches become fragile as problem size and ambiguity increase.
In realistic permutation learning tasks, different parts of the assignment resolve at different rates: some rows or columns become nearly deterministic early in training, while others remain ambiguous due to symmetry or weak structural cues.
A single global temperature forces all assignments to sharpen or diffuse simultaneously, leading to an unavoidable trade-off between stability and discretization.
Aggressive annealing can lock in early mistakes, while conservative annealing prevents large, ambiguous permutations from sharpening sufficiently.
\cradd{Section~\ref{sec:global-limitations} isolates this trade-off in a block-ambiguous construction, showing why heterogeneous assignment uncertainty can make scalar temperature schedules difficult to tune. This stability--discretization tension closely mirrors known trade-offs in entropically regularized optimal transport~\cite{chizat2024annealed_sinkhorn}.}

We address this limitation with an entropy-adaptive formulation of Gumbel--Sinkhorn that locally modulates inverse temperature based on assignment uncertainty.
Instead of a single global scalar, we introduce a row- and column-dependent inverse-temperature field derived from the entropy of the current soft permutation.
This allows confident assignments to discretize early while preserving exploration in ambiguous regions, and is equivalent to locally rescaling assignment costs over the Birkhoff polytope (Section~\ref{sec:local-scaling}).
\cradd{The proposed formulation is a lightweight extension of the Gumbel--Sinkhorn family: it changes how temperature is parameterized, while leaving the relaxation, constraints, decoding procedure, and underlying convergence properties of Gumbel--Sinkhorn unchanged.} Figure~\ref{fig:method_overview} provides an overview of the proposed entropy-adaptive inverse-temperature mechanism, illustrating how confident assignments discretize early while ambiguous regions remain diffuse.

We make the following contributions:
\begin{itemize}[leftmargin=1.2em, topsep=2pt]
    \item We propose a unified framework for learning permutations from structure, treating them as latent operators and training them directly from task-specific structural losses on reordered data. This enables unsupervised learning without access to ground-truth permutations.

    \item We introduce an entropy-adaptive variant of Gumbel--Sinkhorn that replaces a single global temperature with a locally varying temperature field defined at the row and column level.
    Inverse temperature is reduced for assignments with high uncertainty, allowing confident matches to discretize early while preserving exploration elsewhere.

    \item We show that applying Sinkhorn normalization with an entrywise inverse-temperature field is equivalent to solving an entropically regularized assignment problem with locally rescaled costs,
    providing an optimal transport interpretation of adaptive temperature control over the Birkhoff polytope.

    \item Across sorting, jigsaw reconstruction, and unsupervised Travelling Salesman Problem (TSP), entropy-adaptive temperature control yields more reliable convergence and higher-quality permutations than fixed or globally annealed temperatures, particularly as the number of elements $n$ increases and assignment ambiguity becomes heterogeneous. Our code implementation is available through this ~\href{https://github.com/LindenbaumLab/Learning-Permutation-from-Structure-Without-Supervision}{link}. 
\end{itemize}

\section{Related Work}

\begin{figure*}[t]
    \centering
    \includegraphics[width=0.75\textwidth, trim=0 80 0 65, clip]{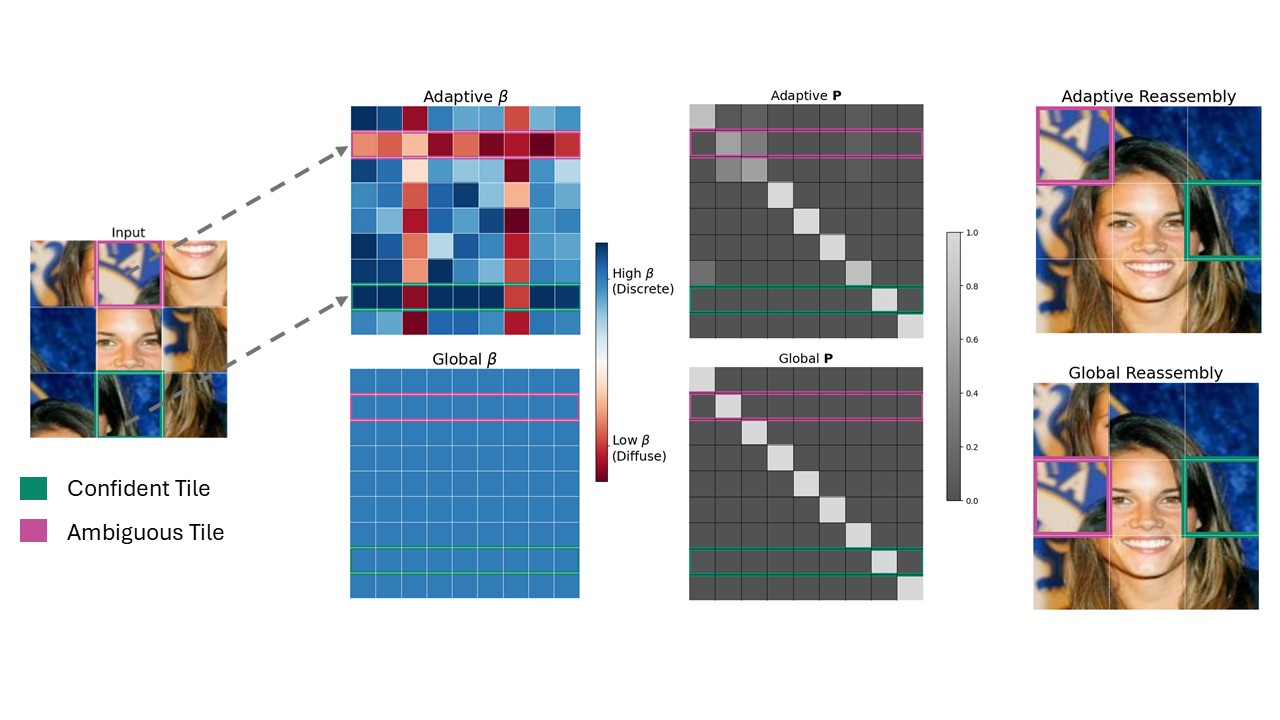}
    \caption{
    Overview of entropy-adaptive inverse-temperature control for learning permutations with Gumbel--Sinkhorn. Starting from a scrambled input image (left), the model predicts a soft permutation over tile positions. The adaptive inverse-temperature field $\mat{\beta}$ varies across rows and columns based on assignment uncertainty, assigning high 
    $\beta$ to confident tiles while keeping ambiguous tiles diffuse. In contrast, a single global 
    $\beta$ applies uniform sharpening to all assignments.
    The resulting soft permutation matrices 
    $\mat{P}$. Adaptive temperature yields a mixture of sharp and soft rows, whereas global temperature forces all rows to discretize uniformly. Final reassemblies obtained by decoding the soft permutations. Adaptive temperature preserves flexibility in ambiguous regions while correctly fixing confident tiles, thereby improving reconstruction quality.
    }
    \label{fig:method_overview}
\end{figure*}

Learning and reasoning over permutations has received attention in recent years, motivated by tasks in which structure is revealed only after elements are reordered. A common strategy is to represent permutations using continuous relaxations that enable gradient-based optimization \cite{battash2024revisiting} while approximating discrete permutation matrices.

\textbf{Differentiable permutation relaxations.}
Early work introduced differentiable layers that approximate permutation matrices using Sinkhorn normalization, enabling end-to-end learning of assignments within neural networks. DeepPermNet employs unrolled Sinkhorn iterations to predict doubly stochastic matrices as soft permutations for visual reordering tasks~\cite{santacruz2017deeppermnet}. Mena et al.\ formalize this approach with Sinkhorn networks and introduce the Gumbel--Sinkhorn distribution, which combines Gumbel noise with entropic regularization to sample differentiable approximations of permutation matrices that converge to discrete permutations as temperature decreases~\cite{mena2018gumbel_sinkhorn}.

These methods are closely related to entropy-regularized optimal transport, where Sinkhorn normalization arises as the solution to a regularized assignment problem over the Birkhoff polytope~\cite{cuturi2013sinkhorn}. More recent work explores alternative relaxation geometries for permutation learning. OT4P maps unconstrained parameters to the orthogonal group using a temperature-dependent transformation that concentrates mass near permutation matrices as temperature decreases~\cite{guo2024ot4p}. While differing in geometry, these approaches similarly rely on a global temperature parameter to control the sharpness of the relaxation.
Beyond visual reordering and routing, learned permutations have also been used inside representation-learning objectives. COPER uses correlation-based permutations across views to construct a multi-view clustering objective~\cite{eisenberg2025coper}. Our focus differs in that we study permutation learning from generic structural losses and stabilize the relaxation via local uncertainty-aware temperature control.

\textbf{Stochastic optimization and perturbation-based relaxations.}
Beyond deterministic relaxations, stochastic formulations enable learning with latent permutations through reparameterized sampling. Sinkhorn Policy Gradient (SPG) integrates a temperature-controlled Sinkhorn operator into reinforcement learning pipelines to optimize permutation-structured actions~\cite{emami2018sinkhorn_pg}. Perturbation-based relaxations such as the Gumbel--Softmax and Concrete distributions provide a general framework for differentiable sampling of discrete random variables~\cite{jang2017gumbel_softmax,maddison2017concrete}, of which Gumbel--Sinkhorn can be viewed as a structured, matrix-valued extension~\cite{mena2018gumbel_sinkhorn}. 

Recent theoretical work has clarified the close connection between entropy regularization and perturbation-based smoothing in differentiable subset selection and structured prediction~\cite{sun2023unified_reg_perturb_subset}. Our method builds directly on this perspective by making temperature adaptive and spatially varying, rather than uniform.

\textbf{Differentiable sorting, ranking, and selection.}
A related line of work focuses on differentiable surrogates for sorting, ranking, and top-$k$ selection operators. NeuralSort introduces a continuous relaxation of the sorting operator that enables stochastic gradient estimation over permutations~\cite{grover2018stochastic}. SoftSort proposes a simple and efficient relaxation of the \texttt{argsort} operator with favorable empirical behavior~\cite{prillo2020softsort}. Other approaches connect sorting to optimal transport: Cuturi et al.\ formulate differentiable sorting and ranking as entropically regularized optimal transport problems solved via Sinkhorn iterations~\cite{cuturi2019differentiable}, while Blondel et al.\ propose fast differentiable sorting and ranking via projections onto the Permutahedron~\cite{blondel2020fast}.

Differentiable top-$k$ and subset selection methods further generalize these ideas beyond full permutations~\cite{petersen2022difftopk}. These methods are designed to approximate functional operators such as sorting or selection directly, rather than to learn a latent permutation matrix. In contrast, our focus is on learning permutations as latent operators optimized through downstream structural objectives.
Another closely related family replaces discrete feature subsets with differentiable gates. Stochastic Gates (STG) learns approximate Bernoulli feature selectors through a continuous relaxation~\cite{yamada2020feature,sristicontextual}, while gated-Laplacian feature selection optimizes an unsupervised structural criterion over gated inputs~\cite{lindenbaum2021differentiable,naor2026hybrid}. These works share the idea of optimizing a latent discrete structure via differentiable relaxations; here, the latent structure is a full one-to-one assignment constrained to lie within the Birkhoff polytope.

\textbf{Annealing and entropy control in optimal transport.}
Entropic regularization is typically controlled through a single global parameter. In optimal transport, annealed Sinkhorn schemes vary the regularization strength over iterations to trade off approximation accuracy and convergence behavior~\cite{chizat2024annealed_sinkhorn}. While such schedules improve numerical stability and solution quality, the regularization remains uniform across the transport plan. In contrast, our method introduces heterogeneous, uncertainty-aware entropy control by adapting temperature locally based on assignment uncertainty within the permutation matrix. Related work has also explored adaptive or locally varying regularization in optimal transport and matching problems.
Our approach is conceptually related but differs in scope and mechanism: we introduce adaptive inverse temperature within the Gumbel--Sinkhorn relaxation for latent permutation learning, using row- and column-level assignment entropy as an uncertainty signal, and integrate it with stop-gradient control and downstream structural losses.
This enables uncertainty-aware discretization in unsupervised permutation learning settings, rather than improving transport accuracy for a fixed matching objective.

\textbf{Unsupervised permutation learning for combinatorial optimization.}
Unsupervised approaches to combinatorial optimization often rely on surrogate objectives combined with decoding procedures. Unsupervised Learning for Solving the Travelling Salesman Problem (UTSP) trains a graph neural network to predict a soft edge heat map optimized for short tours, followed by search-based decoding to obtain valid TSP solutions~\cite{min2023unsupervised}. Subsequent work reformulates TSP explicitly as permutation learning using Gumbel--Sinkhorn relaxations and analyzes the role of entropy in solution quality~\cite{min2023unsupervised_workshop, min2025structure}. Our work is closest in spirit to these permutation-based approaches but differs in scope: we treat permutation learning as a general-purpose operator and focus on stabilizing learning via local entropy adaptation rather than relying on a single global temperature schedule or fixed structural templates.

\textbf{Unsupervised jigsaw reconstruction.}
Jigsaw-style self-supervision provides a natural testbed for permutation learning, where structure emerges only after correct reordering. Differentiable permutation relaxations enable direct optimization of reconstruction losses on reordered inputs~\cite{mena2018gumbel_sinkhorn}. Recent work explores alternative unsupervised formulations that decode permutations sequentially, for example, by tokenizing pieces and using autoregressive sequence-to-sequence models to predict placements~\cite{elkin2025seq2seq}. Compared to such discrete sequential decoding approaches, our framework predicts a soft permutation matrix in one shot via adaptive Gumbel--Sinkhorn and applies losses directly to the reordered output, making local entropy control central to scalability and stability.

\section{Method}

We consider permutation learning as a differentiable assignment problem.
Given an unordered set of $n$ elements
$\mat{X} \in \mathbb{R}^{n \times d}$, the goal is to learn a permutation that maps these elements to an ordered set of $n$ positions.

A (discrete) permutation is represented by a permutation matrix
$\mat{\Pi} \in \{0,1\}^{n \times n}$ satisfying
$\mat{\Pi}\mathbf{1}=\mathbf{1}$ and $\mat{\Pi}^\top\mathbf{1}=\mathbf{1}$,
where $\Pi_{ij}=1$ indicates that input element $i$ is assigned to position $j$ and $\mathbf{1}\in\mathbb{R}^n$ is the vector of all ones.
These matrices are in one-to-one correspondence with the symmetric group $S_n$.
Applying a permutation reorders the input as $\hat{\mat{X}}=\mat{\Pi}^\top \mat{X}$.

To enable gradient-based optimization, we relax permutation matrices to the Birkhoff polytope and represent soft permutations by $\mat{P} \in \mathcal{B}_n$,
where
\[
\mathcal{B}_n
=
\left\{
\mat{P}\in\mathbb{R}_{+}^{n\times n}
\;\middle|\;
\mat{P}\mathbf{1}=\mathbf{1},\;
\mat{P}^\top\mathbf{1}=\mathbf{1}
\right\}
\]
is the Birkhoff polytope.
In this relaxed setting, $\hat{\mat{X}}=\mat{P}^\top \mat{X}$ defines a soft reordering.

Given the unordered set of elements
$\mat{X}$, a neural network $f_\theta$ outputs a matrix of compatibility scores 
$f_\theta(\mat{X})=\mat{S}\in\mathbb{R}^{n\times n}$,
where $\theta$ denotes the model's parameters, and $S_{ij}$ denotes the compatibility score between input element $i$
and position $j$.
Using $\mat{S}$, a soft permutation $\mat{P}$ is then sampled from a differentiable relaxation
of the permutation distribution and optimized as a latent variable following \cite{mena2018gumbel_sinkhorn}.
\cradd{At a high level, the Gumbel-Sinkhorn method serves as the matrix-valued counterpart to Gumbel-Softmax. The Sinkhorn operator transforms logits into the Birkhoff polytope, while the temperature parameter determines whether the resulting doubly stochastic matrix is sparse or nearly discrete.}

\begin{equation}
\begin{aligned}
\mat{P}
=
\mathrm{Sinkhorn}\!\Bigl(
 \mat{S} + \mat{g}
\Bigr),
g_{ij} \sim \mathrm{Gumbel}(0,1).
\end{aligned}
\label{eq:gs-global-general}
\end{equation}

\cradd{where $g_{ij}$ is i.i.d.\ Gumbel noise $g_{ij}\sim\mathrm{Gumbel}(0,1)$ added to the assignment logits; and $\mathrm{Sinkhorn}(\cdot)$ is the Sinkhorn operator, which iteratively}
normalizes rows and columns of a matrix, as defined in \cite{adams2011ranking}\footnote{See additional details in Appendix~\ref{app:gumbel_sinkhorn}}. \cradd{Deterministic Sinkhorn alone is sufficient when one only needs a differentiable soft assignment for fixed logits, but it does not by itself sample from a latent distribution over permutations. The Gumbel perturbation supplies a reparameterized stochastic search over competing matchings, encourages exploration when several assignments have similar scores, and reduces deterministic tie-breaking effects during training.} To approximate a discrete permutation, a temperature parameter $\tau \in \mathbb{R}$ is applied to the Sinkhorn operator $\mat{P}
=
\mathrm{Sinkhorn}\!\Bigl(
 \frac{\mat{S} + \mat{g}}{\tau}
\Bigr)$ such that as $\tau \to 0$, the resulting matrix $\mat{P}$ becomes increasingly close to a permutation matrix, while larger values of $\tau$ yield smoother, more entropic doubly stochastic matrices. During training, $\tau$ is typically annealed from a high to a low value to enable stable optimization, with smooth assignments early on and increasingly discrete assignments as training progresses. 

Learning is driven by task-specific, structured losses defined directly on the reordered output. The neural network architecture is further discussed in Section~\ref{sec:exp} and Appendix~\ref{app:details}.

\begin{algorithm}[t]
\caption{Entropy-adaptive Gumbel--Sinkhorn (inverse temperature)}
\label{alg:eags}
\footnotesize
\begin{algorithmic}[1]
\REQUIRE Scores $\mat{S}\in\mathbb{R}^{n\times n}$, base inverse temperature $\beta_0(t)$
\ENSURE Soft permutation $\mat{P}$

\STATE $\mat{Q} \leftarrow \mathrm{Sinkhorn}\!\left(\beta_0(t)\,\mat{S}\right)$
\STATE Compute $H^{\mathrm{row}}, H^{\mathrm{col}}$ as in Eq.~\eqref{eq:entropies}
\STATE $\beta_i^{\mathrm{row}} \leftarrow \beta_0(t)\big/\bigl(1+b(H_i^{\mathrm{row}})\bigr)$
\STATE $\beta_j^{\mathrm{col}} \leftarrow \beta_0(t)\big/\bigl(1+b(H_j^{\mathrm{col}})\bigr)$
\STATE $\beta_{ij} \leftarrow \tfrac{1}{2}\bigl(\beta_i^{\mathrm{row}} + \beta_j^{\mathrm{col}}\bigr)$
\STATE Sample $g_{ij}\sim\mathrm{Gumbel}(0,1)$
\STATE Let $\mat{g}=(g_{ij})_{i,j}$
\STATE $\mat{P} \leftarrow \mathrm{Sinkhorn}\!\left(\mat{\beta}\odot(\mat{S}+\mat{g})\right)$ as in Eq.~\eqref{eq:gs-global}
\end{algorithmic}
\end{algorithm}

\subsection{Learning permutations from structural supervision}

We consider permutation learning problems where ground-truth permutations are unavailable.
Instead of supervising the permutation directly, learning is driven by structural properties of the object obtained after reordering.

First, the predicted soft permutation matrix $\mat{P}_\theta \in \mathcal{B}_n$ is applied and yields a reordered object $\hat{\mat{X}} = \mat{P}_\theta \mat{X}$. The objective is a task-dependent structural loss $
\mathcal{L}_{\mathrm{struct}}(\hat{\mat{X}}),
$
which evaluates whether the reordered object $\hat{\mat{X}}$ satisfies desired structural properties, such as monotonicity, spatial coherence, or short tour length. The task-specific structure loss is defined in Section~\ref{sec:exp}

Because structural losses typically do not uniquely identify a single permutation, multiple assignments may satisfy the loss equally well (such as a correctly assembled jigsaw puzzle rotated clockwise).
As a result, different rows and columns of the soft permutation can exhibit heterogeneous uncertainty during optimization.

\subsection{Entropy-adaptive Gumbel--Sinkhorn}\label{sec:eags}

\paragraph{Conventions (temperature vs.\ inverse temperature).}
We parameterize Gumbel--Sinkhorn using an inverse temperature $\beta>0$.
Throughout the paper, $\beta = \frac{1}{\tau}$ where $\tau$ denotes the conventional temperature parameter. Larger $\beta$ produces sharper (lower-entropy) soft permutations, while smaller $\beta$ yields more diffuse assignments.
We write costs as $\mat{C}=-\mat{S}$.
Applying inverse temperature corresponds to cost scaling: $
\mathrm{Sinkhorn}\!\bigl(\beta\,\mat{S}\bigr)
\;\equiv\;
\mathrm{Sinkhorn}\!\bigl(-\beta\,\mat{C}\bigr).
$
Inverse temperature control is expressed in terms of $\beta$. When $\beta$ is a scalar, we write $\beta\,\mat{S}$ for standard scalar--matrix multiplication.
When inverse temperature varies across entries, we use elementwise scaling and write
$\mat{\beta}\odot\mat{S}$, where $\mat{\beta},\mat{S}\in\mathbb{R}^{n\times n}$. For temperature annealing, $\beta_0(t)$ is denoted as the starting base inverse temperature,  and is annealed according to a fixed schedule indexed by training iteration $t$.

A central difficulty in permutation learning is that different assignments resolve at different rates. Some rows and columns of the assignment matrix become confident early in training, while others remain highly ambiguous. A single global temperature forces all assignments to sharpen or diffuse simultaneously, either freezing incorrect matches or preventing convergence in large problems.
This design is directly motivated by the theoretical limitation of global temperature control established in Section~\ref{sec:global-limitations}.
We address this limitation by making the temperature adaptive to the uncertainty of the current soft permutation.

At each training step, we first construct a deterministic soft assignment using the base inverse temperature $\beta_0(t)$,
\begin{equation}
\mat{Q}
=
\mathrm{Sinkhorn}\!\left(
\beta_0(t)\, \mat{S}
\right).
\label{eq:gs-adaptive}
\end{equation}
When constructing $\mat{Q}$ for entropy estimation, we apply a stop-gradient to the scores,
i.e., $\mat{Q}=\mathrm{Sinkhorn}\!\bigl(\beta_0(t)\,\mathrm{stopgrad}(\mat{S})\bigr)$,
so that gradients do not propagate through the entropy-based controller.
Equivalently, during backpropagation within a training step, we treat the inverse-temperature field $\mat{\beta}$ as constant and differentiate only through the final mapping in Eq.~\eqref{eq:gs-global}.

From $\mat{Q}$, we compute normalized row and column entropies,
\begin{equation}
\begin{aligned}
H_i^{\mathrm{row}}
&=
-\frac{1}{\log n}
\sum_j (\mat{Q})_{ij}\,\log (\mat{Q})_{ij}, \\
H_j^{\mathrm{col}}
&=
-\frac{1}{\log n}
\sum_i \mat{Q}_{ij} \log \mat{Q}_{ij}.
\end{aligned}
\label{eq:entropies}
\end{equation}

where values near zero indicate confident assignments and values near one indicate high uncertainty.

We increase temperature / decrease inverse temperature in high-entropy rows: $
b(H)
=
b_{\max}\cdot
\mathrm{clip}\!\left(
\frac{H - H_0}{1 - H_0},\,
0,\,
1
\right),
$
where $H_0 \in [0,1)$ is an entropy threshold and $b_{\max} \ge 0$ is the maximum relative temperature increase.

These entropy estimates define an entrywise inverse-temperature field that modifies the effective assignment costs prior to Sinkhorn normalization.
For each row and column, we define temperature scaling factors
$
\beta_i^{\mathrm{row}} = \frac{\beta_0(t)}{1 + b(H_i^{\mathrm{row}})},
\qquad
\beta_j^{\mathrm{col}} = \frac{\beta_0(t)}{1 + b(H_j^{\mathrm{col}})}.
$
where $b(\cdot)$ is a bounded, monotone function that increases temperature only when entropy exceeds a threshold.
Row and column inverse temperatures are combined into an entrywise inverse-temperature field $\mat{\beta}\in\mathbb{R}^{n\times n}$ with entries $\beta_{ij}=\tfrac{1}{2}\bigl(\beta_i^{\mathrm{row}} + \beta_j^{\mathrm{col}}\bigr)$.

The final soft permutation is then sampled as 

\begin{equation}
\begin{aligned}
\mat{P}
=
\mathrm{Sinkhorn}\!\left(
\mat{\beta}\odot( \mat{S} + \mat{g})
\right),
g_{ij} \sim \mathrm{Gumbel}(0,1),
\end{aligned}
\label{eq:gs-global}
\end{equation}

where $\odot$ denotes elementwise multiplication.

This construction selectively reduces cost contrast in high-entropy regions by lowering inverse temperature,
while leaving low-entropy regions at the base inverse temperature $\beta_0(t)$, recovering the standard Gumbel--Sinkhorn behavior in the large-inverse-temperature limit $\beta_0(t)\to\infty$. \cradd{Additional implementation details are provided in Appendix~\ref{app:adaptive-details}.} Algorithm~\ref{alg:eags} summarizes the entropy-adaptive Gumbel--Sinkhorn method used throughout.

\subsection{Limitations of global temperature annealing}
\label{sec:global-limitations}
A central assumption in existing differentiable permutation methods is that a single global temperature parameter is sufficient to control the trade-off between exploration and discretization. In practice, however, different parts of a permutation often resolve at different rates: some assignments become effectively deterministic early in training, while others remain ambiguous due to symmetry or weak structural cues.
\cradd{The block construction below captures this heterogeneous case in a minimal assignment problem, where the inverse temperature preferred by confident rows conflicts with the inverse temperature needed to keep ambiguous rows diffuse.}

Consider the following block-ambiguous assignment problem.
Let $n=n_1+n_2$ and fix $\Delta>0$.
Define a cost matrix $\mat{C}\in\mathbb{R}^{n\times n}$ by
\begin{equation}
C_{ij}=
\begin{cases}
0, & i=j \le n_1,\\
\Delta, & i\le n_1,\; j\le n_1,\; j\neq i,\\
0, & i=j > n_1,\\
\delta, & i> n_1,\; j> n_1,\; j\neq i,\\
M, & \text{otherwise},
\end{cases}
\label{eq:block-ambiguous-C}
\end{equation}
where $M$ is any constant satisfying $M\ge 2\Delta$ (we will take $M$ large enough that cross-block mass is negligible).
The top-left $n_1\times n_1$ block is ``easy'' (unique diagonal minimizers separated by margin $\Delta$),
while the bottom-right $n_2\times n_2$ block is ``hard'' (diagonal minimizers separated by a small margin $\delta$).

\begin{proposition}[\cradd{Global-temperature trade-off in a block-ambiguous assignment}]
\label{prop:no-global-tau}
Let $n=n_1+n_2$ and let $\mat{C}$ be given by~\eqref{eq:block-ambiguous-C} with parameters $\Delta>\delta>0$ and $M$ sufficiently large that cross-block mass is negligible.
Let $\mat{P}_\beta=\mathrm{Sinkhorn}(-\beta\,\mat{C})$.

Fix any $\varepsilon\in(0,1/2)$ and any $\eta\in(0,1/2)$.
There exist choices of $\Delta>\delta>0$ (depending on $\varepsilon,\eta,n_1,n_2$) such that there is no $\beta>0$ for which both hold:
\begin{enumerate}
    \item (\textbf{easy block discretizes}) for every $i\le n_1$,
    $
    P_{\beta,ii} \ge 1-\varepsilon;
    $
    \item (\textbf{hard block stays diffuse}) for every $i>n_1$,
    $
    P_{\beta,ii} \le 1-\eta.
    $
\end{enumerate}
\end{proposition}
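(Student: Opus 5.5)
Our plan is to compute $\mat{P}_\beta$ in closed form, using the block structure together with the symmetry inside each block. Taking $M\to\infty$, which is how we read ``cross-block mass is negligible'', the Sinkhorn fixed point decouples into two independent problems on $n_1\times n_1$ and $n_2\times n_2$ blocks. Each block has the form $K=e^{-\beta c}\,(\mathbf{1}\mathbf{1}^\top - I) + I$, where $c=\Delta$ or $c=\delta$. This kernel is symmetric and already has equal row and column sums, $1+(m-1)e^{-\beta c}$ with $m$ the block size. Since Sinkhorn scaling is unique, the scaling vectors are constant and the limit is just $K$ divided by its row sum. This gives
\[
P_{\beta,ii}=\frac{1}{1+(n_1-1)e^{-\beta\Delta}}\ (i\le n_1),\qquad
P_{\beta,ii}=\frac{1}{1+(n_2-1)e^{-\beta\delta}}\ (i> n_1).
\]
For finite large $M$ we would instead argue by continuity of the Sinkhorn limit in $M$, or simply adopt the negligible-mass idealization stated in the proposition. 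We expect the rigorous justification of this decoupling to be the only delicate point. We would handle it by noting that the Sinkhorn scalings are continuous in the kernel entries, and that the cross-block entries $e^{-\beta M}\to0$ uniformly for $\beta$ bounded below. Small $\beta$ causes no trouble, because there condition 1 already fails, as the next step shows.

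Next we translate each condition into a bound on $\beta$. We assume $n_1,n_2\ge 2$; otherwise a block is trivially discrete or empty, and the statement needs that degenerate case excluded. Condition 1 is equivalent to $(n_1-1)e^{-\beta\Delta}\le \varepsilon/(1-\varepsilon)$, that is,
\[
\beta\ \ge\ \beta_{\rm easy}:=\frac{1}{\Delta}\log\frac{(n_1-1)(1-\varepsilon)}{\varepsilon}.
\]
Condition 2 is equivalent to $(n_2-1)e^{-\beta\delta}\ge \eta/(1-\eta)$, that is,
\[
\beta\ \le\ \beta_{\rm hard}:=\frac{1}{\delta}\log\frac{(n_2-1)(1-\eta)}{\eta}.
\]
If this logarithm is negative, condition 2 already fails for every $\beta>0$, and we are done trivially.

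The proof finishes by choosing $\Delta$ and $\delta$ so that $\beta_{\rm hard}<\beta_{\rm easy}$, which leaves no admissible $\beta$. Write $A=\log\frac{(n_1-1)(1-\varepsilon)}{\varepsilon}$ and $B=\log\frac{(n_2-1)(1-\eta)}{\eta}$. If $A\le 0$, condition 1 holds for all $\beta$. We then need a different choice of parameters. The statement only asks for existence, so we aim for the case where the requirement $\beta>B/\delta$ is forced. Concretely, we need $B/\delta<A/\Delta$, that is, $\Delta<\delta A/B$, together with $\Delta>\delta$. This is possible exactly when $A>B>0$. When $A\le B$, we would instead note the role reversal: $\delta$ can be made tiny with $\Delta$ fixed.

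At this point we expect a genuine obstacle. Because $\Delta>\delta$ makes the easy block sharpen faster, the incompatibility seems to point the wrong way. The real tension is that making the hard block discrete would force the easy block to be sharp, not the converse. We would therefore verify the claimed direction carefully. If the inequalities cannot be arranged as stated, we would present the closed-form thresholds $\beta_{\rm easy}$ and $\beta_{\rm hard}$ and show incompatibility in whatever parameter regime $A/\Delta>B/\delta$ holds. Such a regime is achievable whenever $A>B$, for instance when $n_1$ is large or $\varepsilon$ is small relative to $\eta$, by taking $\Delta/\delta\in(1,A/B)$.
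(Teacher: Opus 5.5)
Your closed-form diagonals and your two exact thresholds are correct. The obstacle you hit is genuine: the proposition is false as stated, so no argument can complete your last step in general.

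\textbf{Why the statement fails.} Under the decoupling idealization with $n_1,n_2\ge 2$, both conditions hold if and only if $A/\Delta\le\beta\le B/\delta$. Your $A$ and $B$ are automatically positive, since $n_i-1\ge 1$ and $(1-\varepsilon)/\varepsilon>1$, $(1-\eta)/\eta>1$. So your branches for $A\le 0$ and $B<0$ are vacuous. In any case $A$ depends only on $(n_1,\varepsilon)$, so no choice of $\Delta,\delta$ can change it.

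If $A\le B$, then for every $\Delta>\delta>0$ we have $A/\Delta<A/\delta\le B/\delta$. Any $\beta$ strictly inside this window satisfies both conditions with strict inequality, and by your continuity argument this persists for large finite $M$. For example, $n_1=n_2=2$ and $\varepsilon=\eta=1/4$ give $A=B=\log 3$, and no admissible pair $\Delta>\delta$ exists. For $n_1=1$ the statement also fails, since condition~1 becomes vacuous.

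Your proposed remedy for $A\le B$ (fix $\Delta$ and shrink $\delta$) goes the wrong way. It raises $B/\delta$ and only widens the window.

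What is actually true is the conditional you end with:
\begin{itemize}
\item if $A>B$, every pair with $1<\Delta/\delta<A/B$ admits no $\beta$;
\item if $A\le B$, every pair $\Delta>\delta>0$ admits one.
\end{itemize}
Your worry about the direction is also justified. The main-text heuristic that $\delta\ll\Delta$ makes the requirements incompatible is backwards. Incompatibility needs the margins to be close, $\Delta/\delta<A/B$, and is driven entirely by $A>B$, that is, by many easy rows or by $\varepsilon$ being small relative to $\eta$.

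\textbf{How the paper's proof handles this.} The paper's proof follows your skeleton: a lower bound on $\beta$ from the easy block, the exact symmetric formula $1/(1+(n_2-1)e^{-\beta\delta})$ on the hard block, then a choice of $\Delta,\delta$ separating the two bounds. It does not get around the obstacle; it hides it in two places.
\begin{itemize}
\item Lemma~\ref{lem:easy-beta} derives the \emph{sufficient} condition $(n_1-1)e^{-\beta\Delta}\le\varepsilon$. It then states the resulting bound $\beta\ge\frac{1}{\Delta}\log\frac{n_1-1}{\varepsilon}$ as \emph{necessary}. The true necessary and sufficient threshold is your $A/\Delta$, which is strictly smaller.
\item Step~3 simply asserts that $\Delta>\delta$ can be chosen with $\frac{1}{\Delta}\log\frac{n_1-1}{\varepsilon}>B/\delta$. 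This tacitly requires $\log\frac{n_1-1}{\varepsilon}>B$.
\end{itemize}
Together these two slips certify a false conclusion. Take $n_1=n_2=2$, $\varepsilon=\eta=1/4$, $\Delta=1.1$, $\delta=1$. The paper's Step~3 inequality holds, since $\log 4/1.1\approx 1.26>\log 3\approx 1.10$. Yet $\beta=1.05$ gives $P_{\beta,11}\approx 0.760\ge 3/4$ and $P_{\beta,33}\approx 0.741\le 3/4$.

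So your exact-threshold computation is the correct content here. The right deliverable is the conditional statement with the hypothesis $A>B$ added, not the proposition as written.
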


Intuitively, achieving (1) requires $\beta=\Omega\!\bigl((\log(n_1/\varepsilon))/\Delta\bigr)$ so that the large-margin rows become nearly one-hot.
In contrast, achieving (2) requires $\beta=O\!\bigl((\log((n_2-1)(1-\eta)/\eta))/\delta\bigr)$ so that the small-margin rows do not collapse.
For $\delta\ll\Delta$, these requirements are incompatible, so no single global inverse temperature can satisfy both simultaneously.
A proof is provided in Appendix~\ref{app:global-limit}.

For an illustrative comparison of global vs.\ entropy-adaptive inverse-temperature behavior on a block-ambiguous assignment, see Appendix~\ref{app:extra-figs} (Fig.~\ref{fig:entropy_adaptive_mechanism}).

\subsection{Local inverse-temperature scaling of assignment costs}
\label{sec:local-scaling}

Entropy-adaptive temperature control can be interpreted as a form of local cost
rescaling in the entropically regularized assignment.
Applying Sinkhorn normalization to
$\mat{\beta}\odot\mat{S}$ is equivalent to solving an assignment problem
in which relative costs are selectively attenuated in high-uncertainty regions.
This delays symmetry breaking, where assignments remain ambiguous while allowing
confident regions to concentrate.

Formally, adaptive inverse temperature induces a locally rescaled cost matrix,
but feasibility is still enforced globally by the doubly stochastic constraints.
As a result, local temperature control cannot be reproduced by any global or
separable temperature schedule. The precise optimal-transport formulation and proof of equivalence are provided
in Appendix~\ref{app:local-cost-scaling}.

\begin{table*}[t]
\centering
\scriptsize
\setlength{\tabcolsep}{3pt}
\renewcommand{\arraystretch}{0.84}
\caption{Sorting performance (Kendall $\tau$, mean $\pm$ std over 3 seeds).}
\label{tab:sort}
\resizebox{1.00\textwidth}{!}{%
\begin{tabular}{llcccccc}
\toprule
Value range & Method & 5 & 10 & 50 & 100 & 200 & 300 \\
\midrule

\multirow{5}{*}{$[0,1]$}
& Supervised GS 
& $1.00 \pm 0.00$ & $1.00 \pm 0.00$ & $1.00 \pm 0.00$
& $1.00 \pm 0.00$ & $1.00 \pm 0.00$ & $1.00 \pm 0.00$ \\
& NS 
& $1.00 \pm 0.00$ & $1.00 \pm 0.00$ & $1.00 \pm 0.00$
& $\mathbf{1.00 \pm 0.00}$ & $\mathbf{0.99 \pm 0.00}$ & $\mathbf{0.73 \pm 0.00}$ \\
& Trained NS
& $0.94 \pm 0.10$ & $0.71 \pm 0.50$ & $0.17 \pm 0.80$
& $0.00 \pm 0.87$ & $-0.05 \pm 0.91$ & $-0.03 \pm 0.89$ \\
& Unsupervised GS (Ours)
& $1.00 \pm 0.00$ & $1.00 \pm 0.00$ & $1.00 \pm 0.00$
& $1.00 \pm 0.00$ & $0.87 \pm 0.22$ & $0.44 \pm 0.22$ \\
& Unsupervised GS w TA (Ours)
& $1.00 \pm 0.00$ & $1.00 \pm 0.00$ & $1.00 \pm 0.00$
& $1.00 \pm 0.00$ & $0.91 \pm 0.16$ & $0.69 \pm 0.23$ \\

\addlinespace[2pt]
\midrule

\multirow{5}{*}{$[10,11]$}
& Supervised GS
& $1.00 \pm 0.00$ & $1.00 \pm 0.00$ & $1.00 \pm 0.00$
& $1.00 \pm 0.00$ & $1.00 \pm 0.00$ & $1.00 \pm 0.00$ \\
& NS
& $1.00 \pm 0.00$ & $0.33 \pm 0.00$ & $0.01 \pm 0.00$
& $-0.00 \pm 0.00$ & $-0.00 \pm 0.00$ & $0.00 \pm 0.00$ \\
& Trained NS
& $0.21 \pm 0.68$ & $0.03 \pm 0.85$ & $-0.38 \pm 0.13$
& $-0.10 \pm 0.42$ & $-0.07 \pm 0.31$ & $-0.04 \pm 0.28$ \\
& Unsupervised GS (Ours)
& $1.00 \pm 0.00$ & $1.00 \pm 0.00$ & $1.00 \pm 0.00$
& $0.74 \pm 0.10$ & $0.60 \pm 0.22$ & $0.06 \pm 0.12$ \\
\rowcolor{rowhl}
& Unsupervised GS w TA (Ours)
& $1.00 \pm 0.00$ & $1.00 \pm 0.00$ & $1.00 \pm 0.00$
& $\mathbf{0.80 \pm 0.11}$ & $\mathbf{0.67 \pm 0.22}$ & $\mathbf{0.15 \pm 0.14}$ \\

\bottomrule
\end{tabular}%
}
\end{table*}

\textbf{Task-specific uncertainty signals for temperature adaptation}: The entropy signal above is generic, but some tasks provide additional uncertainty measures (e.g., feasibility violations in routing).
We show how to incorporate such signals purely through temperature modulation in Appendix~\ref{app:task-signals}.

\section{Experiments}
\label{sec:exp}

\begin{table*}[t]
\centering
\small
\setlength{\tabcolsep}{3pt}
\renewcommand{\arraystretch}{1.15}
\caption{
Jigsaw results on CelebA-Test (unsupervised training).
We report Kendall $\tau$ (mean $\pm$ std over seeds) and improvement over a mask-only baseline,
$\Delta\tau = \tau_{\mathrm{method}} - \tau_{\mathrm{mask\text{-}only}}$.
Mask-only baselines (anchor + random fill) are $0.029$ ($5\times5$), $0.126$ ($6\times6$), and $0.189$ ($7\times7$).
See Appendix~\ref{app:mask_only}.
}
\label{tab:jigsaw}

\begin{tabular}{lcccccc}
\toprule
& \multicolumn{2}{c}{$5 \times 5$ (25, 1 anchor)} 
& \multicolumn{2}{c}{$6 \times 6$ (36, 6 anchors)}
& \multicolumn{2}{c}{$7 \times 7$ (49, 12 anchors)} \\
\cmidrule(lr){2-3}\cmidrule(lr){4-5}\cmidrule(lr){6-7}
Method & Kendall $\tau$ & $\Delta\tau$~$\uparrow$
       & Kendall $\tau$ & $\Delta\tau$~$\uparrow$
       & Kendall $\tau$ & $\Delta\tau$~$\uparrow$ \\
\midrule
GS (global anneal)
& $0.178 \pm 0.003$ & $0.149 \pm 0.003$
& $0.256 \pm 0.003$ & $0.130 \pm 0.003$
& $0.201 \pm 0.003$ & $0.012 \pm 0.003$ \\
\rowcolor{rowhl}
GS (entropy-adaptive)
& $\mathbf{0.199 \pm 0.010}$ & $\mathbf{0.170 \pm 0.010}$
& $\mathbf{0.283 \pm 0.038}$ & $\mathbf{0.157 \pm 0.038}$
& $\mathbf{0.291 \pm 0.093}$ & $\mathbf{0.102 \pm 0.093}$ \\
\bottomrule
\end{tabular}

\end{table*}

\begin{table*}[b]
\centering
\tiny
\setlength{\tabcolsep}{6pt}
\renewcommand{\arraystretch}{1.2}
\caption{Tour length and relative optimality gap (\%) for unsupervised TSP (mean $\pm$ std over 3 seeds).}
\resizebox{1.00\textwidth}{!}{%
\begin{tabular}{llcccccccc}
\toprule
Model & Method
& \multicolumn{2}{c}{$n=100$}
& \multicolumn{2}{c}{$n=200$}
& \multicolumn{2}{c}{$n=500$}
& \multicolumn{2}{c}{\cradd{$n=1000$}} \\
\cmidrule(lr){3-4}\cmidrule(lr){5-6}\cmidrule(lr){7-8}\cmidrule(lr){9-10}
& & Length~$\downarrow$ & Gap (\%)~$\downarrow$
& Length~$\downarrow$ & Gap (\%)~$\downarrow$
& Length~$\downarrow$ & Gap (\%)~$\downarrow$
& \cradd{Length~$\downarrow$} & \cradd{Gap (\%)~$\downarrow$} \\
\midrule
\multirow{3}{*}{GNN}
& UTSP \cite{min2023unsupervised}
& $22.53 \pm 0.54$ & $45.08 \pm 3.48$
& $32.33 \pm 0.56$ & $50.68 \pm 2.61$
& $55.26 \pm 3.05$ & $66.61 \pm 9.19$
& \cradd{$82.43 \pm 2.69$} & \cradd{$77.44 \pm 5.79$} \\
& GS UTSP \cite{min2023unsupervised_workshop,min2025structure}
& $20.38 \pm 0.60$ & $31.24 \pm 3.86$
& $29.72 \pm 0.21$ & $38.51 \pm 0.98$
& $50.73 \pm 1.72$ & $52.95 \pm 5.18$
& \cradd{$79.41 \pm 1.39$} & \cradd{$70.95 \pm 3.03$} \\
\rowcolor{rowhl}
& GS UTSP + DV-TA (Ours)
& $\mathbf{20.21 \pm 0.23}$ & $\mathbf{30.14 \pm 1.48}$
& $\mathbf{29.54 \pm 0.25}$ & $\mathbf{37.67 \pm 1.17}$
& $\mathbf{49.44 \pm 0.15}$ & $\mathbf{49.06 \pm 0.45}$
& \cradd{$\mathbf{77.88 \pm 1.39}$} & \cradd{$\mathbf{67.65 \pm 3.00}$} \\
\bottomrule
\end{tabular}
}

\label{tab:tsp}
\end{table*}

We evaluate the proposed entropy-adaptive permutation learning framework on tasks where supervision is provided by the structural properties of the reordered output.
Across all experiments, the model predicts a soft permutation matrix $\mat{P}$ and applies it to the input to obtain a reordered object $\hat{\mat{X}} = \mat{P}\mat{X}$.
Training losses are defined on $\hat{\mat{X}}$ rather than on the permutation itself, allowing learning under weak or absent permutation supervision.

To address whether local temperature adaptation remains beneficial beyond the Birkhoff/Sinkhorn geometry, we additionally evaluated  OT4P~\cite{guo2024ot4p}, an orthogonal-group relaxation with temperature-controlled matrix powering, under the same unsupervised objective. Because OT4P relies on a different relaxation geometry, its performance can be sensitive to the temperature schedule, numerical stabilizations,
and the unsupervised objective. In an unsupervised sorting setting, OT4P did not recover meaningful permutations (Kendall $\tau$ near chance), as further elaborated in Appendix~\ref{app:geom-baselines}.

For list sorting, the input consists of a randomly permuted list of $n$ scalar values. We follow the architecture and training protocol of \cite{mena2018gumbel_sinkhorn}: a one-dimensional convolutional network produces a score matrix $\mat{S} \in \mathbb{R}^{n \times n}$, which is relaxed using Gumbel--Sinkhorn. The training data comprises 10,000 randomly generated lists, with 100 held-out test lists. Values are sampled uniformly from a specified range.

We consider two value regimes: $[0,1]$, where monotonic cues are strong, and ambiguity is low, and $[10,11]$, where relative differences between values are smaller and assignment ambiguity increases with $n$. Both supervised and unsupervised training are evaluated. Supervised training minimizes the mean squared error between the reordered list and the ground-truth sorted list. In the unsupervised setting, supervision is provided through a structural monotonicity constraint on the reordered list.
For ascending order, the structural loss is defined as $
\mathcal{L}_{\mathrm{struct}}(\hat{\mat{X}})
=
\sum_{i=1}^{n-1}
\mathrm{ReLU}\!\left( -(\hat{x}_{i+1} - \hat{x}_i) \right)^2.$

We compare supervised and unsupervised Gumbel--Sinkhorn with global temperature annealing, unsupervised Gumbel--Sinkhorn with entropy-adaptive temperature, and NeuralSort. NeuralSort is evaluated in a train-free manner by sweeping over temperatures and reporting the Kendall--$\tau$ score for each configuration, following common practice in differentiable sorting benchmarks. We additionally report results for a trained NeuralSort variant optimized under the same unsupervised monotonicity objective, as discussed below and in Appendix~\ref{app:sorting-baselines}. At evaluation time, a discrete permutation is extracted following \cite{mena2018gumbel_sinkhorn} using the Hungarian algorithm, and performance is measured using Kendall--$\tau$ correlation.

Sorting results are reported in Table~\ref{tab:sort}, measured by Kendall $\tau$
(higher is better).
We compare supervised and unsupervised permutation learning baselines,
including Gumbel--Sinkhorn (GS) and NeuralSort (NS), and TA denotes entropy-adaptive temperature control (ours).
NS is evaluated only in the unsupervised setting, either without training
or trained under the same structural objective as GS.
For GS, we compare a global temperature schedule to the proposed
entropy-adaptive temperature control.

In the low-ambiguity regime $[0,1]$, all methods perform well up to moderate
problem sizes, and differences between temperature control strategies are small.
In contrast, under the high-ambiguity regime $[10,11]$, performance degrades as
$n$ increases.
In this setting, entropy-adaptive temperature consistently outperforms global
annealing, with the gap widening as $n$ increases, indicating improved robustness
when uncertainty is heterogeneous.
\cradd{In an additional hard sorting schedule study at $n=300$ with values in $[10,11]$, entropy-adaptive GS improved Kendall $\tau$ over global GS by $+0.122$, $+0.100$, and $+0.059$ under annealing horizons of 150, 100, and 50 epochs, respectively, indicating that the gains are not explained only by choosing a better scalar schedule.}
We further observe that operator-based sorting relaxations degrade rapidly under
this weak unsupervised signal, consistent with their reliance on a scalar ordering
prior rather than a latent permutation representation
(Appendix~\ref{app:sorting-baselines}). This effect is visualized in Figure~\ref{fig:controller_diagnostics}, which plots Kendall~$\tau$ against assignment ambiguity during training and shows that entropy-adaptive temperature achieves strictly better performance at comparable entropy levels.

We next evaluate image reordering, in which an image is partitioned into tiles and randomly permuted, and the model predicts a permutation that reconstructs the original spatial arrangement. Supervised training minimizes reconstruction error with respect to the ground-truth ordering, while unsupervised training relies on a smoothness prior that penalizes visual discontinuities across adjacent tile borders. Formally, $
\mathcal{L}_{\mathrm{struct}}(\hat{\mat{X}})
=
\sum_{\text{adjacent tiles}}
\|\partial \hat{\mat{X}}\|^2.$ Here $\partial \hat{\mat{X}}$ denotes the image gradient across shared boundaries of adjacent tiles, and $\|\cdot\|^2$ is the squared $\ell_2$ norm over pixel differences along each boundary.

We compare constant temperature, global annealing, and entropy-adaptive temperature control. Performance is measured using Kendall--$\tau$ correlation between predicted and ground-truth tile positions, and the test score. Results are reported in Tables~\ref{tab:jigsaw_ablation_tau} and~\ref{tab:jigsaw}.
On CelebA-Test with $6\times6$ puzzles under unsupervised training, global temperature annealing improves over fixed-temperature baselines, but entropy-adaptive temperature control achieves the highest Kendall $\tau$ and more stable convergence across seeds (Table~\ref{tab:jigsaw_ablation_tau}).
Across puzzle sizes, the advantage of entropy-adaptive temperature grows with problem size (Table~\ref{tab:jigsaw}), indicating increased robustness as assignment ambiguity becomes more heterogeneous.

Because anchor tiles artificially inflate Kendall $\tau$, we additionally interpret results relative to a mask-only baseline that uses anchor information alone (Appendix~\ref{app:mask_only}).
Measured relative to this baseline, entropy-adaptive temperature yields larger gains than global annealing, particularly for larger puzzles, confirming that improvements arise from better optimization of unconstrained tiles rather than from anchor supervision.

For larger puzzles, the unsupervised boundary-consistency objective is highly ambiguous due to repeated textures and local symmetries, and we observe non-convergence across all methods when no anchor information is provided.
We therefore fix a small number of anchor tiles in their correct positions to break symmetries and make the optimization well-posed, while remaining predominantly unsupervised (1/25 for $5\times5$, 6/36 for $6\times6$, and 12/49 for $7\times7$).
The anchor budget is chosen as the smallest value that avoids degenerate failure modes across seeds.
Because anchors fix many pairwise relations, raw Kendall $\tau$ is inflated by the amount of revealed information and is not directly comparable across puzzle sizes.
To account for this effect, we additionally report improvements over a mask-only baseline that correctly places the anchor tiles and uniformly randomizes the remaining positions (Appendix~\ref{app:mask_only}).

Finally, we apply the framework to unsupervised TSP following the UTSP experimental protocol \cite{min2023unsupervised}. In this unsupervised setting, the structural objective is defined on the induced tour. $\mathcal{L}_{\mathrm{struct}}(\hat{\mat{X}})
=
\sum_{i=1}^{n}
\|\hat{x}_{i} - \hat{x}_{i+1}\|_2, \hat{x}_{n+1} = \hat{x}_1$. A graph neural network predicts a permutation that orders cities into a tour. Unlike prior work, we do not apply a post-hoc solver; we report the tour length induced directly by the learned permutation. We evaluate a degree-violation-based temperature adaptation (DV-TA) that increases temperature for nodes whose expected degree deviates from two.
\cradd{The same qualitative ordering persists at $n=1000$: GS improves over the UTSP baseline, and adding DV-TA further reduces both tour length and optimality gap without additional numerical instability in this regime.}

\textbf{Sensitivity to temperature adaptation hyperparameters.}
We evaluate the sensitivity of entropy-adaptive temperature control to the entropy threshold $H_0$ and maximum boost $b_{\max}$.
Results show that performance is stable across a wide range of values, with no brittle behavior or training failures.
We report a detailed sensitivity analysis for unsupervised TSP in Appendix~\ref{app:sensitivity}.

Evaluation is performed using the average tour length and the relative optimality gap with respect to the Concorde-optimal solutions. Results are reported in Table~\ref{tab:tsp}. DV-TA provides consistent gains, particularly for larger problem sizes. Additional UTSP ablations are reported in Appendix~\ref{app:utsp-ablation}.

\textbf{Implementation details.}
We defer all optimization, normalization, and hyperparameter details to Appendix~\ref{app:details}.

\begin{table}[t]
\centering
\small
\setlength{\tabcolsep}{4pt}
\renewcommand{\arraystretch}{0.92}
\caption{Unsupervised Jigsaw ablation on CelebA-Test with $7 \times 7$ tiles ($n=49$, 12 anchors).
Variants differ in the smoothness loss or in how row/column entropy is combined for TA; see Appendix~\ref{app:ablations-details}.
}
\begin{tabular}{lc}
\toprule
Method & Kendall $\tau$~$\uparrow$ \\
\midrule
GS (fixed $\beta=2$)     & $0.152 \pm 0.025$ \\
GS (fixed $\beta=1$)       & $0.194 \pm 0.032$ \\
GS (global anneal)  & $0.197 \pm 0.005$ \\
GS (entropy-adaptive) \textit{product}   & $0.246 \pm 0.033$ \\
GS (entropy-adaptive) \textit{smooth2k}   & $0.247 \pm 0.051$ \\
\rowcolor{rowhl}
GS (entropy-adaptive)   & $\mathbf{0.291 \pm 0.093}$ \\
\bottomrule
\end{tabular}
\label{tab:jigsaw_ablation_tau}
\end{table}

\begin{figure}[t]
  \centering
\includegraphics[width=0.35\textwidth]{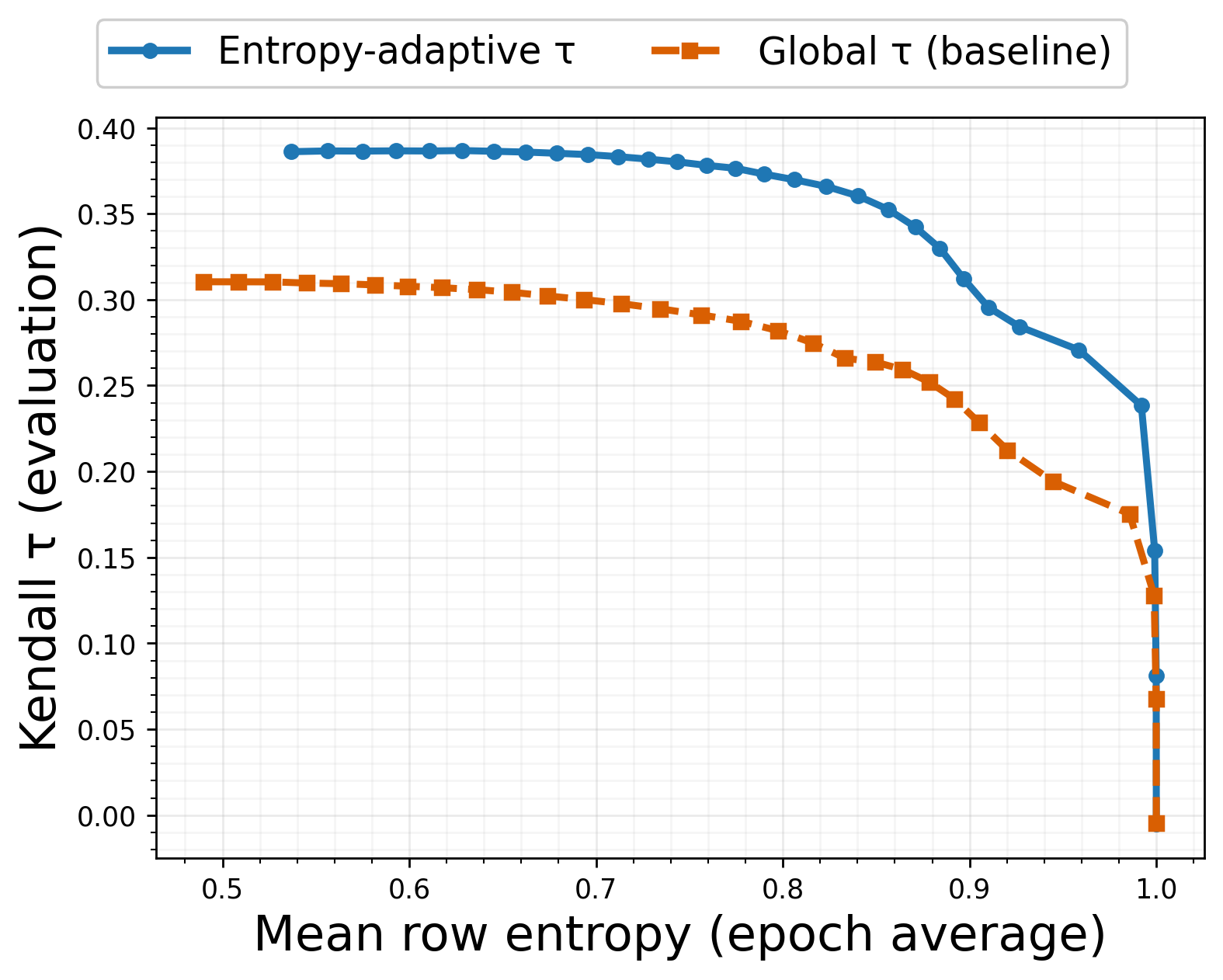}
\caption{Sorting performance (Kendall $\tau$) as a function of assignment ambiguity, measured by the mean row entropy of the soft permutation, with curves tracing optimization progress over training.
For a given level of ambiguity, entropy-adaptive temperature consistently attains higher Kendall $\tau$ than a global temperature baseline, indicating a strictly better ambiguity--performance tradeoff.}
  \label{fig:controller_diagnostics}
\end{figure}

\section{Conclusion}

We studied permutation learning in settings where supervision is available only through structure in the reordered output, and identified heterogeneous uncertainty across assignments as a central obstacle to scalability in differentiable permutation methods. Standard Gumbel--Sinkhorn relies on a single global temperature, forcing all rows and columns to sharpen simultaneously and inducing an unavoidable trade-off between stability and discretization as problem size and ambiguity increase.

To address this limitation, we introduced an entropy-adaptive variant of Gumbel--Sinkhorn that replaces the global temperature with a row- and column-dependent inverse-temperature field derived from assignment entropy. This allows confident assignments to discretize early while preserving exploration in ambiguous regions. We further provided an optimal-transport interpretation in which adaptive temperature corresponds to local rescaling of assignment costs under global doubly stochastic constraints. Across sorting, unsupervised jigsaw reconstruction, and unsupervised TSP-style routing, entropy-adaptive temperature control consistently improved convergence reliability and final permutation quality relative to fixed or globally annealed baselines, with the largest gains appearing in large-scale and highly ambiguous regimes. These results suggest that effective permutation learning from structure depends critically on managing where and when entropy collapses during optimization. 

\textbf{Limitations:} Our approach introduces additional hyperparameters for entropy adaptation, such as the entropy
threshold and maximum temperature adjustment, which require tuning, although we find performance to be stable across a wide range of values. Moreover, while local entropy control improves optimization stability, it does not resolve fundamental ambiguities in objectives that are highly symmetric or weakly informative, and some
form of symmetry breaking or additional structure would still be required in such settings. 

\textbf{Future work} could investigate learning entropy-adaptive temperature control end-to-end, rather than relying on fixed controller forms or manually chosen thresholds. This direction is particularly relevant for multimodal learning, where discrete or near-discrete assignments often arise when aligning information across modalities, such as image--text regions, audio--visual events, molecular measurements from different assays, or heterogeneous tabular and sequential observations~\citep{lindenbaum2015learning,lindenbaum2016multi,salhov2020multi}. Another direction is to extend uncertainty-aware discretization to more complex combinatorial structures, such as partial permutations, constrained routing problems, or structured multimodal matching under side information. Finally, analyzing its theoretical properties beyond the assignment setting could clarify under which conditions entropy-aware relaxations improve optimization stability, robustness, and generalization in multimodal representation learning. 

\textbf{Acknowledgments.} OL and RE were supported by the MOST grant No. 0007341.

\clearpage
\section*{Impact Statement}

Learning permutations from structure has potential benefits in domains where correct orderings are latent, ambiguous, or costly to annotate, including document and event sequencing, visual matching and alignment, and routing or scheduling problems subject to global constraints. By improving the stability and scalability of differentiable permutation relaxations, entropy-adaptive temperature control may reduce reliance on labeled orderings, lower computational waste from unstable training runs, and make permutation-based components more practical in large systems. At the same time, permutation learning can be applied in sensitive settings, such as ranking, matching, or prioritization, where learned orderings may reflect biases encoded in the data or in the structural objectives. Uncertainty-aware discretization may also cause early spurious structure to become fixed if objectives are misspecified or if symmetry-breaking signals are misleading. For these reasons, downstream applications in consequential domains should include careful objective design, bias and robustness evaluations, and analysis of failure modes under distribution shift. When discrete permutations are decoded for decision-making, uncertainty and calibration diagnostics should be reported alongside point estimates to support responsible use.
\clearpage
\bibliographystyle{unsrtnat}
\bibliography{example_paper}

@inproceedings{lindenbaum2015learning,
  title={Learning coupled embedding using multiview diffusion maps},
  author={Lindenbaum, Ofir and Yeredor, Arie and Salhov, Moshe},
  booktitle={International Conference on Latent Variable Analysis and Signal Separation},
  pages={127--134},
  year={2015},
  organization={Springer}
}

@inproceedings{lindenbaum2016multi,
  title={Multi-channel fusion for seismic event detection and classification},
  author={Lindenbaum, Ofir and Rabin, Neta and Bregman, Yuri and Averbuch, Amir},
  booktitle={2016 IEEE International Conference on the Science of Electrical Engineering (ICSEE)},
  pages={1--5},
  year={2016},
  organization={IEEE}
}

@article{salhov2020multi,
  title={Multi-view kernel consensus for data analysis},
  author={Salhov, Moshe and Lindenbaum, Ofir and Aizenbud, Yariv and Silberschatz, Avi and Shkolnisky, Yoel and Averbuch, Amir},
  journal={Applied and Computational Harmonic Analysis},
  volume={49},
  number={1},
  pages={208--228},
  year={2020},
  publisher={Elsevier}
}

@inproceedings{eisenberg2025coper,
  title={{COPER}: Correlation-based Permutations for Multi-View Clustering},
  author={Eisenberg, Ran and Svirsky, Jonathan and Lindenbaum, Ofir},
  booktitle={The Thirteenth International Conference on Learning Representations},
  year={2025},
  url={https://openreview.net/forum?id=5ZEbpBYGwH}
}

@article{naor2026hybrid,
  title={Hybrid autoencoders for tabular data: Leveraging model-based augmentation in low-label settings},
  author={Naor, Erel and Lindenbaum, Ofir},
  journal={Advances in Neural Information Processing Systems},
  volume={38},
  pages={95066--95085},
  year={2026}
}

@inproceedings{yamada2020feature,
  title={Feature Selection using Stochastic Gates},
  author={Yamada, Yutaro and Lindenbaum, Ofir and Negahban, Sahand and Kluger, Yuval},
  booktitle={Proceedings of the 37th International Conference on Machine Learning},
  pages={10648--10659},
  year={2020},
  editor={III, Hal Daum{\'e} and Singh, Aarti},
  volume={119},
  series={Proceedings of Machine Learning Research},
  month={13--18 Jul},
  publisher={PMLR},
  url={https://proceedings.mlr.press/v119/yamada20a.html}
}

@inproceedings{sristicontextual,
  title     = {Contextual Feature Selection with Conditional Stochastic Gates},
  author    = {Sristi, Ram Dyuthi and Lindenbaum, Ofir and Lifshitz, Shira and Lavzin, Maria and Schiller, Jackie and Mishne, Gal and Benisty, Hadas},
  booktitle = {Forty-first International Conference on Machine Learning},
  year      = {2024}
}

@inproceedings{battash2024revisiting,
  title={Revisiting the noise model of stochastic gradient descent},
  author={Battash, Barak and Wolf, Lior and Lindenbaum, Ofir},
  booktitle={International Conference on Artificial Intelligence and Statistics},
  pages={4780--4788},
  year={2024},
  organization={PMLR}
}

@inproceedings{lindenbaum2021differentiable,
  title={Differentiable Unsupervised Feature Selection based on a Gated Laplacian},
  author={Lindenbaum, Ofir and Shaham, Uri and Peterfreund, Erez and Svirsky, Jonathan and Casey, Nicolas and Kluger, Yuval},
  booktitle={Advances in Neural Information Processing Systems},
  volume={34},
  pages={1530--1542},
  year={2021},
  url={https://proceedings.neurips.cc/paper/2021/hash/0bc10d8a74dbafbf242e30433e83aa56-Abstract.html}
}

@article{min2023unsupervised,
  title={Unsupervised learning for solving the travelling salesman problem},
  author={Min, Yimeng and Bai, Yiwei and Gomes, Carla P},
  journal={Advances in neural information processing systems},
  volume={36},
  pages={47264--47278},
  year={2023}
}

@inproceedings{min2023unsupervised_workshop,
  title={Unsupervised learning permutations for tsp using gumbel-sinkhorn operator},
  author={Min, Yimeng and Gomes, Carla},
  booktitle={NeurIPS 2023 Workshop Optimal Transport and Machine Learning},
  year={2023}
}

@article{elkin2025seq2seq,
  title={PuzLM: Solving Jigsaw Puzzles with Sequence-to-Sequence Language Models},
  author={Elkin, Gur and Shahar, Ofir Itzhak and Ben-Shahar, Ohad},
  journal={arXiv preprint arXiv:2511.06315},
  year={2025},
  eprint={2511.06315},
  archivePrefix={arXiv},
  primaryClass={cs.CV},
  url={https://arxiv.org/abs/2511.06315}
}

@article{min2025structure,
  title        = {Structure As Search: Unsupervised Permutation Learning for Combinatorial Optimization},
  author       = {Min, Yimeng and Gomes, Carla P.},
  journal      = {arXiv preprint arXiv:2507.04164},
  year         = {2025},
  url          = {https://arxiv.org/abs/2507.04164},
  eprint       = {2507.04164},
  archivePrefix= {arXiv},
  primaryClass = {cs.LG}
}

@inproceedings{mena2018gumbel_sinkhorn,
  title={Learning Latent Permutations with Gumbel-Sinkhorn Networks},
  author={Mena, Gonzalo and Belanger, David and Linderman, Scott and Snoek, Jasper},
  booktitle={International Conference on Learning Representations (ICLR)},
  year={2018},
  url={https://openreview.net/forum?id=Byt3oJ-0W}
}

@article{chizat2024annealed_sinkhorn,
  title={Annealed Sinkhorn for Optimal Transport: convergence, regularization path, and debiasing},
  author={Chizat, Lénaïc},
  journal={arXiv preprint arXiv:2408.11620},
  year={2024},
  url={https://arxiv.org/abs/2408.11620}
}

@inproceedings{
grover2018stochastic,
title={Stochastic Optimization of Sorting Networks via Continuous Relaxations},
author={Aditya Grover and Eric Wang and Aaron Zweig and Stefano Ermon},
booktitle={International Conference on Learning Representations},
year={2019},
url={https://openreview.net/forum?id=H1eSS3CcKX},
}

@inproceedings{santacruz2017deeppermnet,
  title     = {DeepPermNet: Visual Permutation Learning},
  author    = {Santa Cruz, Rodrigo and Fernando, Basura and Cherian, Anoop and Gould, Stephen},
  booktitle = {Proceedings of the IEEE Conference on Computer Vision and Pattern Recognition (CVPR)},
  year      = {2017}
}

@article{emami2018sinkhorn_pg,
  title     = {Learning Permutations with Sinkhorn Policy Gradient},
  author    = {Emami, Patrick and Ranka, Sanjay},
  journal   = {arXiv preprint arXiv:1805.07010},
  year      = {2018},
  eprint    = {1805.07010},
  archivePrefix = {arXiv},
  primaryClass = {cs.LG},
  url       = {https://arxiv.org/abs/1805.07010}
}

@inproceedings{cuturi2019differentiable,
  title     = {Differentiable Ranks and Sorting using Optimal Transport},
  author    = {Cuturi, Marco and Teboul, Olivier and Vert, Jean-Philippe},
  booktitle = {Advances in Neural Information Processing Systems (NeurIPS)},
  year      = {2019}
}

@inproceedings{prillo2020softsort,
  title     = {SoftSort: A Continuous Relaxation for the argsort Operator},
  author    = {Prillo, Sebastian and Eisenschlos, Julian},
  booktitle = {Proceedings of Machine Learning Research},
  year      = {2020}
}

@inproceedings{blondel2020fast,
  title     = {Fast Differentiable Sorting and Ranking},
  author    = {Blondel, Mathieu and Teboul, Olivier and Berthet, Quentin and Djolonga, Josip},
  booktitle = {Proceedings of Machine Learning Research},
  year      = {2020}
}

@inproceedings{guo2024ot4p,
  title     = {OT4P: Unlocking Effective Orthogonal Group Path for Permutation Relaxation},
  author    = {Guo, Yaming and Zhu, Chen and Zhu, Hengshu and Wu, Tieru},
  booktitle = {Advances in Neural Information Processing Systems (NeurIPS)},
  year      = {2024},
  url       = {https://openreview.net/forum?id=pMJFaBzoG3}
}

@inproceedings{cuturi2013sinkhorn,
  title     = {Sinkhorn Distances: Lightspeed Computation of Optimal Transport},
  author    = {Cuturi, Marco},
  booktitle = {Advances in Neural Information Processing Systems (NeurIPS)},
  year      = {2013},
  url       = {https://arxiv.org/abs/1306.0895}
}

@inproceedings{jang2017gumbel_softmax,
  title     = {Categorical Reparameterization with Gumbel-Softmax},
  author    = {Jang, Eric and Gu, Shixiang and Poole, Ben},
  booktitle = {International Conference on Learning Representations (ICLR)},
  year      = {2017},
  url       = {https://arxiv.org/abs/1611.01144}
}

@inproceedings{maddison2017concrete,
  title     = {The Concrete Distribution: A Continuous Relaxation of Discrete Random Variables},
  author    = {Maddison, Chris J. and Mnih, Andriy and Teh, Yee Whye},
  booktitle = {International Conference on Learning Representations (ICLR)},
  year      = {2017},
  url       = {https://arxiv.org/abs/1611.00712}
}

@inproceedings{petersen2022difftopk,
  title     = {Differentiable Top-k Classification Learning},
  author    = {Petersen, Felix and Kuehne, Hilde and Borgelt, Christian and Deussen, Oliver},
  booktitle = {Proceedings of Machine Learning Research},
  year      = {2022},
  url       = {https://arxiv.org/abs/2206.07290}
}

@inproceedings{sun2023unified_reg_perturb_subset,
  title     = {A Unified Perspective on Regularization and Perturbation in Differentiable Subset Selection},
  author    = {Sun, X. and {others}},
  booktitle = {Proceedings of Machine Learning Research (PMLR)},
  year      = {2023},
  url       = {https://proceedings.mlr.press/v206/sun23e.html}
}

@article{adams2011ranking,
  title={Ranking via sinkhorn propagation},
  author={Adams, Ryan Prescott and Zemel, Richard S},
  journal={arXiv preprint arXiv:1106.1925},
  year={2011}
}

\newpage
\appendix
\onecolumn

\section{Proof of Proposition~\ref{prop:no-global-tau}}
\label{app:global-limit}

We prove Proposition~\ref{prop:no-global-tau} for the block-ambiguous cost matrix
defined in~\eqref{eq:block-ambiguous-C}.
Let $n=n_1+n_2$ and recall that
\[
C_{ij}=
\begin{cases}
0, & i=j \le n_1,\\
\Delta, & i\le n_1,\; j\le n_1,\; j\neq i,\\
0, & i=j > n_1,\\
\delta, & i> n_1,\; j> n_1,\; j\neq i,\\
M, & \text{otherwise}.
\end{cases}
\]
We write $\mat{P}_\beta=\mathrm{Sinkhorn}(-\beta\mat{C})$ for the (unique, for $\varepsilon_{\mathrm{ent}}>0$) minimizer of
\[
\min_{\mat{P}\in\mathcal{B}_n}
\;\;
\langle \mat{C}, \mat{P}\rangle
+
\varepsilon_{\mathrm{ent}} \sum_{i,j} P_{ij}\log P_{ij},
\qquad \varepsilon_{\mathrm{ent}}=\tfrac{1}{\beta}.
\]

\paragraph{Step 1: Discretizing the easy block requires a large $\beta$.}

\begin{lemma}[Easy-block discretization]
\label{lem:easy-beta}
If $P_{\beta,ii}\ge 1-\epsilon$ for all $i\le n_1$, then necessarily
\[
\beta \;\ge\; \frac{1}{\Delta}\log\frac{n_1-1}{\epsilon}.
\]
\end{lemma}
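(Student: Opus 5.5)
The plan is to compute the easy block of $\mat{P}_\beta$ in closed form using symmetry, and then turn the hypothesis $P_{\beta,ii}\ge 1-\epsilon$ into a lower bound on $\beta$. One gap is flagged in advance: the symmetric computation directly gives a bound weaker than the stated one by a factor $1-\epsilon$, and closing that factor is the step I expect to be the obstacle.

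\textbf{Step 1 (structure of the Sinkhorn limit).} I will use the standard form of the minimizer of the entropic problem: $P_{\beta,ij}=u_i\,e^{-\beta C_{ij}}\,v_j$ for positive scalings $u,v$. The cost matrix is invariant under simultaneous permutations of indices inside $\{1,\dots,n_1\}$, and likewise inside $\{n_1+1,\dots,n\}$. It is also symmetric, $C=C^\top$. Uniqueness of the minimizer therefore forces $\mat{P}_\beta$ to inherit all these symmetries. Consequently there are constants $a,b,c$, depending on $\beta,\Delta,\delta,M,n_1,n_2$, such that for $i,j\le n_1$
\[
P_{\beta,ii}=a,\qquad P_{\beta,ij}=b\ (j\ne i),\qquad P_{\beta,ij}=c\ \ (i\le n_1,\ j>n_1).
\]
These satisfy $b/a=e^{-\beta\Delta}$ and the row constraint $a+(n_1-1)b+n_2c=1$.

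\textbf{Step 2 (translate the hypothesis).} Set $r=e^{-\beta\Delta}$. From $P_{\beta,ii}=a\ge 1-\epsilon$, the remaining mass in each easy row satisfies $(n_1-1)b+n_2c\le\epsilon$. Dropping the nonnegative cross-block term $n_2c$ gives $(n_1-1)\,a\,r\le\epsilon$. Combining this with $a\ge 1-\epsilon$ yields
\[
r\;\le\;\frac{\epsilon}{(n_1-1)(1-\epsilon)},
\qquad\text{that is,}\qquad
\beta\ \ge\ \frac1\Delta\log\frac{(n_1-1)(1-\epsilon)}{\epsilon}.
\]
In the negligible-cross-mass regime $c\to0$ this inequality is tight, because then $a=1/(1+(n_1-1)r)$ exactly.

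\textbf{Step 3 (reaching the stated constant, the main obstacle).} To get $\frac1\Delta\log\frac{n_1-1}{\epsilon}$, I need $(n_1-1)r\le\epsilon$ rather than $(n_1-1)r\le\epsilon/(1-\epsilon)$. My first attempt will be to keep the cross-block term instead of discarding it. Using $c/a=e^{-\beta M}v_{\rm hard}/v_{\rm easy}$ and $M\ge2\Delta$, I will try to show that the cross-block mass absorbs the missing factor, i.e. $n_2c\ge \epsilon\cdot(n_1-1)b$. If this fails, the fallback is to read the hypothesis with $\epsilon$ replaced by $\epsilon/(1-\epsilon)$, and to note that the weaker Step 2 bound already suffices for Proposition~\ref{prop:no-global-tau}. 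The reason is that the later choice of $\Delta,\delta$ only needs $\beta\gtrsim\log(n_1/\epsilon)/\Delta$ up to constants.

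I expect Step 3 to fail in the limit $M\to\infty$. There $c\to0$, and the symmetric closed form shows the stated constant is not attained exactly. I would therefore verify this limit numerically before committing further effort to the cross-block route.
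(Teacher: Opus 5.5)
Your Step 2 is correct, and your suspicion about Step 3 is justified. In the regime where the lemma is used (negligible cross-block mass) the statement is false, and your bound
\[
\beta\;\ge\;\frac{1}{\Delta}\log\frac{(n_1-1)(1-\epsilon)}{\epsilon}
\]
is the sharp necessary condition. You do not need numerics, because your Step 1 already settles the question. With $r=e^{-\beta\Delta}$, the row constraint and $b=ar$ give exactly $a=(1-n_2c)/(1+(n_1-1)r)$. For $i\le n_1<j$, the cross-ratio identity $P_{ij}P_{ji}=P_{ii}P_{jj}\,K_{ij}K_{ji}/(K_{ii}K_{jj})$ together with $\mat{P}_\beta=\mat{P}_\beta^\top$ gives $c\le e^{-\beta M}$.

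Now fix $\beta$ with $\epsilon<(n_1-1)e^{-\beta\Delta}<\epsilon/(1-\epsilon)$. Then $\beta<\frac{1}{\Delta}\log\frac{n_1-1}{\epsilon}$, yet $P_{\beta,ii}>1-\epsilon$ for all $i\le n_1$ once $M$ is large enough. For example, with $n_1=2$ and zero cross mass, $\beta=\frac{1}{\Delta}\log\frac{1-\epsilon}{\epsilon}$ gives $P_{\beta,ii}=1-\epsilon$ exactly. Put differently, at any $\beta$ where $P_{\beta,ii}=1-\epsilon$, the stated bound holds if and only if $n_2c\ge\epsilon^2$. That is exactly what negligible cross mass excludes, so the cross-block route in Step 3 cannot work and should be dropped. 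Your Step 2 bound, by contrast, holds for every $M$, since it only discards $n_2c\ge 0$.

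The paper's own proof does not deliver the stated constant either. It shows that $(n_1-1)e^{-\beta\Delta}\le\epsilon$ is \emph{sufficient} for $P_{\beta,ii}\ge 1-\epsilon$ (up to cross-block mass), which is the converse of the necessity the lemma asserts. Your symmetric closed-form computation is the argument that actually yields a necessary condition, so on this point your route is the sound one.

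Your fallback remark needs one correction: the lost factor is not harmless ``up to constants''. Set $L=\log\frac{(n_1-1)(1-\epsilon)}{\epsilon}$ and $U=\log\frac{(n_2-1)(1-\eta)}{\eta}$, with $U>0$ for $n_2\ge 2$. In Step 3 of the proposition you must choose $\Delta>\delta$ with $L/\Delta>U/\delta$. Because $\Delta/\delta>1$, this is possible (take $1<\Delta/\delta<L/U$) only when $L>U$, and the proposition does not state that hypothesis.

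The factor $1-\epsilon$ is decisive here. For $n_1=n_2\ge 2$ and $\epsilon=\eta$ one has $L=U$. Then $\beta=L/\Delta$ satisfies both conditions for every $\Delta>\delta$ in the zero-cross-mass limit. The paper's constant $\log\frac{n_1-1}{\epsilon}>U$ would wrongly indicate incompatibility in that case. So your corrected lemma yields the proposition under the extra assumption $L>U$, and the proposition genuinely fails without it.
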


\begin{proof}
Fix $i\le n_1$.
Within row $i$, the diagonal entry has cost $0$, while all other entries
$j\le n_1$, $j\neq i$, have cost $\Delta$.
Writing the Sinkhorn solution as $P_{\beta,ij}=u_i K_{ij} v_j$ with
$K_{ij}=e^{-\beta C_{ij}}$, we have $K_{ii}=1$ and
$K_{ij}\le e^{-\beta\Delta}$ for $j\neq i$.
Thus
\[
\sum_{j\neq i,\, j\le n_1} P_{\beta,ij}
\le
u_i e^{-\beta\Delta}\sum_{j\le n_1} v_j.
\]
Since $\sum_{j\le n_1}P_{\beta,ij}\le 1$ and $P_{\beta,ii}=u_iv_i$,
a sufficient condition for $P_{\beta,ii}\ge 1-\epsilon$ is
\[
(n_1-1)e^{-\beta\Delta}\le \epsilon,
\]
which yields the stated bound on $\beta$.
\end{proof}

\paragraph{Step 2: Keeping the hard block diffuse requires a small $\beta$.}

\begin{lemma}[Hard-block non-discretization]
\label{lem:hard-beta}
Assume $\mat{P}_\beta$ assigns zero mass to cross-block entries (equivalently, restrict the problem to the bottom-right $n_2\times n_2$ block).
If $P_{\beta,ii}\le 1-\eta$ for all $i>n_1$, then necessarily
\[
\beta \;\le\; \frac{1}{\delta}\log\frac{(n_2-1)(1-\eta)}{\eta}.
\]
\end{lemma}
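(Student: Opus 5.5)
The plan is to compute $\mat{P}_\beta$ on the hard block in closed form, using the symmetry of the block, and then read off the condition on $\beta$ directly. Under the standing assumption that cross-block entries carry zero mass, the bottom-right $n_2\times n_2$ block of $\mat{P}_\beta$ is itself doubly stochastic. It is therefore the Sinkhorn scaling of the restricted kernel $K^{(2)}$, which has $K^{(2)}_{ii}=1$ and $K^{(2)}_{ij}=e^{-\beta\delta}$ for $j\neq i$. I will write $P_{\beta,ij}=u_iK^{(2)}_{ij}v_j$, exactly as in the proof of Lemma~\ref{lem:easy-beta}, and identify the scaling vectors explicitly.

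\textbf{Key step (exact solution).} Every row and every column of $K^{(2)}$ has the same sum $s=1+(n_2-1)e^{-\beta\delta}$. Hence the constant scalings $u_i=1/s$, $v_j=1$ already give a doubly stochastic matrix $K^{(2)}/s$. The Sinkhorn scaling of a positive matrix is unique up to the trivial rescaling $(u,v)\mapsto(cu,v/c)$, so the resulting matrix is unique. Equivalently, the entropic problem on the block is strictly convex and therefore has a unique minimizer. So $K^{(2)}/s$ is the Sinkhorn output, and for every $i>n_1$
\[
P_{\beta,ii}=\frac{1}{1+(n_2-1)e^{-\beta\delta}}.
\]

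\textbf{Conclusion.} The map $\beta\mapsto P_{\beta,ii}$ is increasing. The condition $P_{\beta,ii}\le 1-\eta$ is equivalent to $(n_2-1)e^{-\beta\delta}\ge \eta/(1-\eta)$. Taking logarithms gives
\[
\beta\le \frac{1}{\delta}\log\frac{(n_2-1)(1-\eta)}{\eta}.
\]
This is in fact an equivalence, which is stronger than the stated necessity.

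\textbf{Main obstacle.} The only delicate point is the zero cross-block mass assumption. For finite $M$ the block is not exactly decoupled, since the cross-block kernel entries are $e^{-\beta M}>0$. I would handle this with a perturbation argument. Let $\mat{P}^{(M)}_\beta$ denote the Sinkhorn output for finite $M$ and $\mat{P}^{(\infty)}_\beta$ its zero-cross-mass limit. Because the Sinkhorn fixed point depends continuously on a positive kernel, and the block-diagonal limit kernel has a unique doubly stochastic scaling, I expect $\mat{P}^{(M)}_\beta\to\mat{P}^{(\infty)}_\beta$ as $M\to\infty$, uniformly for $\beta$ in compact sets bounded away from $0$. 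The statement assumes exact decoupling, so this step only matters for interpreting ``$M$ sufficiently large''. It would yield the same bound up to an arbitrarily small slack in $\eta$.

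With this lemma in hand, Proposition~\ref{prop:no-global-tau} will follow by choosing $\delta$ small relative to $\Delta$. Then the lower bound of Lemma~\ref{lem:easy-beta} exceeds the upper bound above, so no $\beta$ satisfies both conditions.
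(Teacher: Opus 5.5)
Your proposal is correct and follows essentially the paper's argument. Both compute the hard-block Sinkhorn solution in closed form, $P_{\beta,ii}=1/\bigl(1+(n_2-1)e^{-\beta\delta}\bigr)$, and then invert the condition $P_{\beta,ii}\le 1-\eta$. The only difference is in justifying the symmetric form: you note that $K^{(2)}$ has constant row and column sums and invoke uniqueness of the doubly stochastic scaling, which is slightly cleaner than the paper's appeal to Sinkhorn scaling preserving permutation symmetries.
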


\begin{proof}
Restrict attention to the bottom-right $n_2\times n_2$ block.
By symmetry of the costs within this block (all diagonals equal, all off-diagonals equal),
the Sinkhorn solution on this block has constant diagonal entries $a$ and constant off-diagonal entries $b$ with
\[
a+(n_2-1)b=1,
\qquad
\frac{a}{b}=e^{\beta\delta},
\]
because the Gibbs kernel satisfies $K_{ii}=1$ and $K_{ij}=e^{-\beta\delta}$ for $i\neq j$, and Sinkhorn scaling preserves equality classes under the permutation symmetries of the block.
Solving these two equations gives
\[
a \;=\; \frac{1}{1+(n_2-1)e^{-\beta\delta}}.
\]
The condition $a\le 1-\eta$ is equivalent to
\[
\frac{1}{1+(n_2-1)e^{-\beta\delta}}\le 1-\eta
\quad\Longleftrightarrow\quad
(n_2-1)e^{-\beta\delta}\ge \frac{\eta}{1-\eta},
\]
which yields
\[
\beta \;\le\; \frac{1}{\delta}\log\frac{(n_2-1)(1-\eta)}{\eta}.
\]
\end{proof}

\paragraph{Step 3: Incompatibility of requirements.}

Suppose for contradiction that there exists $\beta>0$ such that:
\begin{enumerate}
    \item $P_{\beta,ii}\ge 1-\epsilon$ for all $i\le n_1$;
    \item $P_{\beta,ii}\le 1-\eta$ for all $i>n_1$.
\end{enumerate}
By Lemma~\ref{lem:easy-beta}, condition (1) implies
\[
\beta \;\ge\; \frac{1}{\Delta}\log\frac{n_1-1}{\epsilon}.
\]
By Lemma~\ref{lem:hard-beta}, condition (2) implies
\[
\beta \;\le\; \frac{1}{\delta}\log\frac{(n_2-1)(1-\eta)}{\eta}.
\]
Choose $\Delta>\delta>0$ so that
\[
\frac{1}{\Delta}\log\frac{n_1-1}{\epsilon}
\;>\;
\frac{1}{\delta}\log\frac{(n_2-1)(1-\eta)}{\eta}.
\]
Then no $\beta$ can satisfy both inequalities, a contradiction.
Hence, no such $\beta$ exists.
\qed

\section{Proof of Proposition~\ref{prop:local-cost-scaling}}
\label{app:proof-local-cost}
\begin{proof}
Let $\mat{K}=\exp(\mat{\beta}\odot \mat{S})$, so that
$K_{ij}=\exp(-\beta_{ij} C_{ij})$ where $\mat{C}=-\mat{S}$.
Sinkhorn normalization computes the unique doubly stochastic matrix
\[
\mat{P}=\arg\min_{\mat{P}\in\mathcal{B}_n} \mathrm{KL}(\mat{P}\,\|\,\mat{K}),
\]
where $\mathrm{KL}$ denotes the Kullback--Leibler divergence.
Expanding the KL objective and discarding constants yields
\[
\mathrm{KL}(\mat{P}\,\|\,\mat{K})
=
\sum_{i,j} P_{ij}\log P_{ij}
+
\big\langle \mat{\beta}\odot \mat{C},\,\mat{P}\big\rangle.
\]
Since the negative entropy term is strictly convex on the interior of $\mathcal{B}_n$,
the objective admits a unique minimizer, which coincides with the Sinkhorn output.
\end{proof}

\subsection{Task-specific uncertainty signals for temperature adaptation}
\label{app:task-signals}

The adaptive temperature mechanism described above relies on the entropy of the soft permutation as a generic proxy for uncertainty.
While this signal is sufficient for many permutation learning problems, some structured tasks admit additional notions of uncertainty or infeasibility that are not fully captured by assignment entropy alone.
Importantly, such signals do not alter the permutation formulation or the learning objective; instead, they provide complementary information about where additional exploration is needed.
We show that these task-specific signals can be incorporated into the adaptive entropy framework solely through temperature modulation.

More generally, let $\mat{Q} \in \mathbb{R}^{n \times n}$ denote the deterministic soft permutation obtained from Sinkhorn normalization at base temperature $\beta_0(t)$.
We assume access to a task-dependent uncertainty signal $u_i \ge 0$, defined over input elements or positions, which measures structural inconsistency or constraint violation under the current soft assignment.
These signals are normalized and converted into temperature scaling factors,
\begin{equation}
s_i = 1 + b_{\max}\,\mathrm{norm}(u_i),
\label{eq:ui-to-si}
\end{equation}
where $\mathrm{norm}(\cdot)$ rescales values to $[0,1]$ and $b_{\max}$ bounds the maximum temperature increase.
Row- and column-level scalings are then combined
\[
\beta_{ij} = \beta_0(t)\big/\phi(s_i^{\mathrm{row}}, s_j^{\mathrm{col}}).
\]
All other aspects of the permutation learning procedure remain unchanged.

\vspace{0.5em}
\noindent\textbf{Degree-based uncertainty in routing problems.}
In routing problems such as the Traveling Salesman Problem (TSP), each node must have a degree of exactly two in the induced tour.
Violations of this constraint are a dominant source of failure and may persist even when permutation entropies are low.
Let $E \in [0,1]^{n \times n}$ denote a soft edge adjacency matrix induced by the current permutation, and let
\[
d_i = \sum_{j \neq i} E_{ij},
\]
denote the expected degree of node $i$.
We define a degree-violation signal $u_i = |d_i - 2|$, which increases temperature for structurally problematic nodes and encourages additional exploration in the corresponding rows and columns.

\section{Additional UTSP ablations}
\label{app:utsp-ablation}

Table~\ref{tab:tsp-ablation} reports an additional UTSP ablation comparing global annealing (TA) to the proposed degree-violation-based adaptation (DV-TA). We observe that GS UTSP + TA provides only marginal improvement over the GS UTSP baseline across problem sizes, whereas DV-TA yields consistent gains. This supports our choice to focus the main text on DV-TA as the task-specific temperature adaptation for routing problems.

\begin{table}[ht]
\centering
\tiny
\setlength{\tabcolsep}{6pt}
\renewcommand{\arraystretch}{1.1}
\begin{tabular}{lcccccc}
\toprule
Method
& \multicolumn{2}{c}{$n=100$}
& \multicolumn{2}{c}{$n=200$}
& \multicolumn{2}{c}{$n=500$} \\
\cmidrule(lr){2-3}\cmidrule(lr){4-5}\cmidrule(lr){6-7}
& Length & Gap (\%) & Length & Gap (\%) & Length & Gap (\%) \\
\midrule
GS UTSP \cite{min2023unsupervised_workshop,min2025structure}
& $20.38 \pm 0.60$ & $31.24 \pm 3.86$
& $29.72 \pm 0.21$ & $38.51 \pm 0.98$
& $50.73 \pm 1.72$ & $52.95 \pm 5.18$ \\
GS UTSP + TA (global) (Ours)
& $20.36 \pm 0.53$ & $31.11 \pm 3.41$
& $29.66 \pm 0.23$ & $38.23 \pm 1.07$
& $50.54 \pm 1.36$ & $52.38 \pm 4.10$ \\
GS UTSP + DV-TA (Ours)
& $\mathbf{20.21 \pm 0.23}$ & $\mathbf{30.14 \pm 1.48}$
& $\mathbf{29.54 \pm 0.25}$ & $\mathbf{37.67 \pm 1.17}$
& $\mathbf{49.44 \pm 0.15}$ & $\mathbf{49.06 \pm 0.45}$ \\
\bottomrule
\end{tabular}
\caption{UTSP ablation: global temperature annealing (TA) yields marginal gains over GS UTSP, while DV-TA provides consistent improvements (mean $\pm$ std over 3 seeds).}
\label{tab:tsp-ablation}
\end{table}

\section{Mask-only baseline for anchored jigsaw puzzles}
\label{app:mask_only}

As discussed in Section~\ref{sec:exp}, a small number of anchor tiles is fixed at their correct positions to break symmetries and make large unsupervised jigsaw puzzles well-posed.
While anchors are necessary for optimization to succeed, they also inflate Kendall $\tau$ by fixing many pairwise relations independently of learning.
To quantify this effect, we report a \emph{mask-only} baseline that measures the expected Kendall $\tau$ achievable using the anchor information alone.

Consider a jigsaw puzzle with $n$ tiles and an anchor set of size $k$.
Let $\pi^\star$ denote the ground-truth permutation.
The mask-only baseline is constructed as follows:
\begin{enumerate}
    \item Select an anchor set $F \subseteq \{1,\dots,n\}$ with $|F|=k$.
          Unless otherwise stated, anchor positions are sampled uniformly at random.
    \item Place all anchor tiles at their correct positions according to $\pi^\star$.
    \item Assign the remaining $n-k$ tiles to the remaining positions using a uniform random permutation.
    \item Compute Kendall $\tau$ between the resulting permutation and $\pi^\star$.
\end{enumerate}
This procedure uses exactly the same anchor information available to the learning methods but does not exploit any structural cues from image content or boundary consistency.

The reported values are Monte Carlo estimates of the expected Kendall $\tau$ under the above process.
For each puzzle size and anchor budget, we repeat the mask-only construction over many independent trials and report the empirical mean.
Formally,
\[
\mathbb{E}[\tau] \;\approx\; \frac{1}{R}\sum_{r=1}^{R} \tau^{(r)},
\]
where $\tau^{(r)}$ is the Kendall $\tau$ obtained in trial $r$ and $R$ is the number of Monte Carlo samples.

Table~\ref{tab:mask_only_baseline} reports the estimated mask-only baseline Kendall $\tau$ for the anchor regimes used in the main experiments.
As expected, the baseline increases with the number of anchors, even though the unanchored tiles are placed randomly.
This explains why raw Kendall $\tau$ values are not directly comparable across puzzle sizes with different anchor budgets.

\begin{table}[t]
\centering
\small
\setlength{\tabcolsep}{10pt}
\renewcommand{\arraystretch}{1.1}
\begin{tabular}{lccc}
\toprule
Baseline
& $5 \times 5$ (25, 1 anchor)
& $6 \times 6$ (36, 6 anchors)
& $7 \times 7$ (49, 12 anchors) \\
\midrule
mask-only (anchor + random fill)
& $0.029$
& $0.126$
& $0.189$ \\
\bottomrule
\end{tabular}
\caption{Mask-only baseline Kendall $\tau$ obtained by placing anchor tiles correctly and filling remaining positions uniformly at random.
Values are Monte Carlo estimates under randomly sampled anchor positions.
The exact baseline depends on the anchor placement scheme, but the monotonic increase with anchor budget is consistent across choices.}
\label{tab:mask_only_baseline}
\end{table}

The mask-only baseline isolates the portion of Kendall $\tau$ attributable solely to revealing anchor tiles.
Method performance can therefore be interpreted relative to this baseline, for example by reporting an improvement
$\Delta\tau = \tau_{\mathrm{method}} - \tau_{\mathrm{mask\text{-}only}}$.
This correction removes the artificial advantage conferred by larger anchor budgets and clarifies that the gains observed in the main text arise from improved optimization of the remaining unconstrained tiles, rather than from additional supervision.

\section{Detailed Experimental Setup}
\label{app:details}

\paragraph{Implementation details.}
All experiments use log-space Sinkhorn normalization with a fixed number of iterations (10 for sorting and jigsaw, 50 for TSP) and enforce doubly stochastic constraints.
Training uses Gumbel--Sinkhorn sampling with multiple samples averaged in the loss, while evaluation is deterministic using Hungarian decoding.
Global-temperature baselines anneal a single temperature over training, whereas entropy-adaptive methods compute normalized row and column entropies from the soft permutation $Q$ and apply bounded row- and column-wise temperature scaling after a short warm-up.
All models are trained with Adam, gradient clipping at norm 1.0.
We report the averaged kendal-$\tau$ over three random seeds.

\section{Experimental Protocol}

In all experiments, discrete permutations are extracted using Hungarian decoding. All reported results are averaged over three random seeds unless otherwise stated.

Random seeds affect model initialization, data shuffling, and stochastic sampling.
Sinkhorn normalization is performed in log space with a fixed number of iterations and no convergence tolerance.

\section{Unsupervised Traveling Salesman Problem (TSP)}

\subsection{Data and Evaluation}

City coordinates are loaded from pre-generated instances and normalized by subtracting the mean and applying a fixed rescaling factor.
Tour length is computed as the sum of Euclidean distances between consecutive cities, including the return to the start.
Kendall--$\tau$ is computed using inversion count with canonicalization over forward and reverse tours.

\subsection{Model and Loss}

The GS-based model predicts a soft permutation over cities, which directly induces a tour without post-hoc search.
Training minimizes the expected tour length computed from the induced successor heatmap.
A row-sum constraint penalty is included in the loss.
Degree-violation signals are used exclusively for temperature adaptation and do not directly affect the loss.

\subsection{Optimization and Relaxation}

Log-space Sinkhorn normalization is applied with 50 iterations.
Training uses Gumbel--Sinkhorn sampling with 8 samples per batch for GS-based models and 1 sample for UTSP.
Evaluation is deterministic and does not use Gumbel noise.
Optimization uses Adam with gradient clipping at norm 1.0 and no weight decay.

\subsection{Hyperparameters}

\begin{table}[ht]
\centering
\small
\setlength{\tabcolsep}{6pt}
\renewcommand{\arraystretch}{1.1}
\begin{tabular}{cccccc}
\toprule
$n$ & Batch & Epochs & $\beta_0$ schedule & Adapt start & $H_0 / b_{\max}$ \\
\midrule
100  & 128 & 300 & $1.0 \rightarrow 5$ (200 ep, linear) & 30 & $0.65 / 0.10$ \\
200  & 128 & 300 & $1.0 \rightarrow 5$ (200 ep, linear) & 30 & $0.65 / 0.10$ \\
500  & 8   & 500 & $0.285 \rightarrow 2$ (300 ep, linear) & 30 & $0.65 / 0.10$ \\
1000 & 8   & 500 & $0.285 \rightarrow 2$ (300 ep, linear) & 30 & $0.65 / 0.10$ \\
\bottomrule
\end{tabular}
\caption{Hyperparameters for unsupervised TSP experiments.}
\end{table}

\section{Number Sorting}

\subsection{Data and Evaluation}

Training data consists of randomly generated lists of scalar values sampled uniformly from either $[0,1]$ or $[10,11]$.
Each run uses 10{,}000 training examples and 100 test examples.

\subsection{Model and Loss}

The model is a lightweight Conv1D network that outputs assignment scores.
Supervised training minimizes mean squared error between reordered outputs and ground-truth sorted lists.
Unsupervised training minimizes a monotonicity violation loss computed from adjacent differences in the reordered sequence.

\subsection{Optimization and Relaxation}

Log-space Sinkhorn normalization is applied with 10 iterations.
Training uses Gumbel--Sinkhorn sampling with 5 samples per batch.
Evaluation uses deterministic Hungarian decoding.
Models are trained with Adam and no learning-rate scheduling.

\subsection{Hyperparameters}

\begin{table}[ht]
\centering
\small
\setlength{\tabcolsep}{6pt}
\renewcommand{\arraystretch}{1.1}
\begin{tabular}{cccccc}
\toprule
$n$ & Batch & Epochs & $\beta_0$ schedule & Adapt start & $H_0 / b_{\max}$ \\
\midrule
5--10     & 256 & 150 & $0.66 \rightarrow 2$ (linear) & 1 & $0.7 / 0.1$ \\
50--100   & 256 & 150 & $0.66 \rightarrow 2$ (linear) & 1 & $0.7 / 0.1$ \\
200--300  & 256 & 150 & $0.66 \rightarrow 2$ (linear) & 1 & $0.7 / 0.1$ \\
\bottomrule
\end{tabular}
\caption{Hyperparameters for number sorting experiments.}
\end{table}

\section{Jigsaw Reconstruction}

\subsection{Data and Evaluation}

Experiments are conducted on CelebA images resized to $178\times178$.
Images are split into $p\times p$ grids with $p\in\{4,5,6,7\}$.

\subsection{Model and Loss}

A convolutional network predicts assignment scores for each tile.
Unsupervised training minimizes a boundary smoothness loss between adjacent tiles.
To break symmetries in larger puzzles, a small number of anchor tiles are fixed at known positions.

\subsection{Optimization and Relaxation}

Log-space Sinkhorn normalization is applied with 10 iterations.
Training uses 5 Gumbel--Sinkhorn samples per batch.
Evaluation uses deterministic Hungarian decoding.

\subsection{Hyperparameters}

\begin{table}[ht]
\centering
\small
\setlength{\tabcolsep}{6pt}
\renewcommand{\arraystretch}{1.1}
\begin{tabular}{ccccccc}
\toprule
$p$ & $n$ & Anchors & Epochs & $\beta_0$ schedule & Adapt start & $H_0 / b_{\max}$ \\
\midrule
4 & 16 & 0  & 500 & $0.5 \rightarrow 2$ (linear) & 1 & $0.6 / 0.35$ \\
5 & 25 & 1  & 500 & $0.5 \rightarrow 2$ (linear) & 1 & $0.7 / 0.20$ \\
6 & 36 & 6  & 600 & $0.5 \rightarrow 1.33$ (300 ep, exp) & 1 & $0.85 / 0.35$ \\
\cradd{7} & \cradd{49} & \cradd{12} & \cradd{600} & \cradd{$0.5 \rightarrow 1.66$ (600 ep, exp)} & \cradd{1} & \cradd{$0.85 / 0.35$} \\
\bottomrule
\end{tabular}
\caption{Hyperparameters for unsupervised jigsaw experiments.}
\end{table}

\section{Runtime Overhead}

Entropy-adaptive temperature control requires one additional deterministic Sinkhorn normalization per training step to estimate assignment entropy.
This increases training time by approximately 10--15\% relative to global-temperature baselines, with negligible additional memory overhead.

\section{Sensitivity Analysis of Entropy-Adaptive Temperature Control}
\label{app:sensitivity}

We analyze the sensitivity of entropy-adaptive temperature control to its two main hyperparameters:
the entropy threshold $H_0$ and the maximum relative temperature increase $b_{\max}$.
The threshold $H_0$ determines when local temperature boosting activates, while $b_{\max}$ controls how much additional exploration is permitted in high-uncertainty regions.

We perform a grid sweep over
$H_0 \in \{0.55,\,0.65,\,0.75,\,0.85\}$ and
$b_{\max} \in \{0.05,\,0.10,\,0.20,\,0.35\}$,
holding all other components fixed.
Experiments are conducted on unsupervised TSP with $n=500$ nodes using the GS-based UTSP model with entropy-adaptive temperature control.
For each configuration, we report the test tour length.
A run is considered failed if it produces NaNs or does not improve over the initialization; no such failures were observed.

Figure~\ref{fig:sensitivity_h0_bmax_tsp} shows that performance varies smoothly across the $(H_0, b_{\max})$ grid, with no sharp degradation or unstable regions.
All tested configurations yield comparable tour lengths, indicating that the adaptive mechanism does not rely on precise tuning of these hyperparameters.
This robustness is expected in routing problems, where feasibility constraints strongly restrict the space of valid permutations once local exploration is preserved.

We observe mild trends consistent with the interpretation of the hyperparameters:
very small $b_{\max}$ values can under-preserve exploration in ambiguous regions, while overly large $b_{\max}$ may slightly delay convergence.
Similarly, lower $H_0$ activates temperature boosting more broadly, whereas higher $H_0$ restricts adaptation to only highly uncertain rows and columns.
However, these effects are secondary compared to enabling local, uncertainty-aware temperature control itself.

Across all tested settings, we find that choosing $H_0$ in the range $[0.65,\,0.85]$ and $b_{\max}$ in $[0.1,\,0.35]$ yields stable and competitive performance.
The default values used in the main experiments are selected from this flat region of the sensitivity landscape.

\begin{figure}[t]
    \centering
    \includegraphics[width=0.6\linewidth]{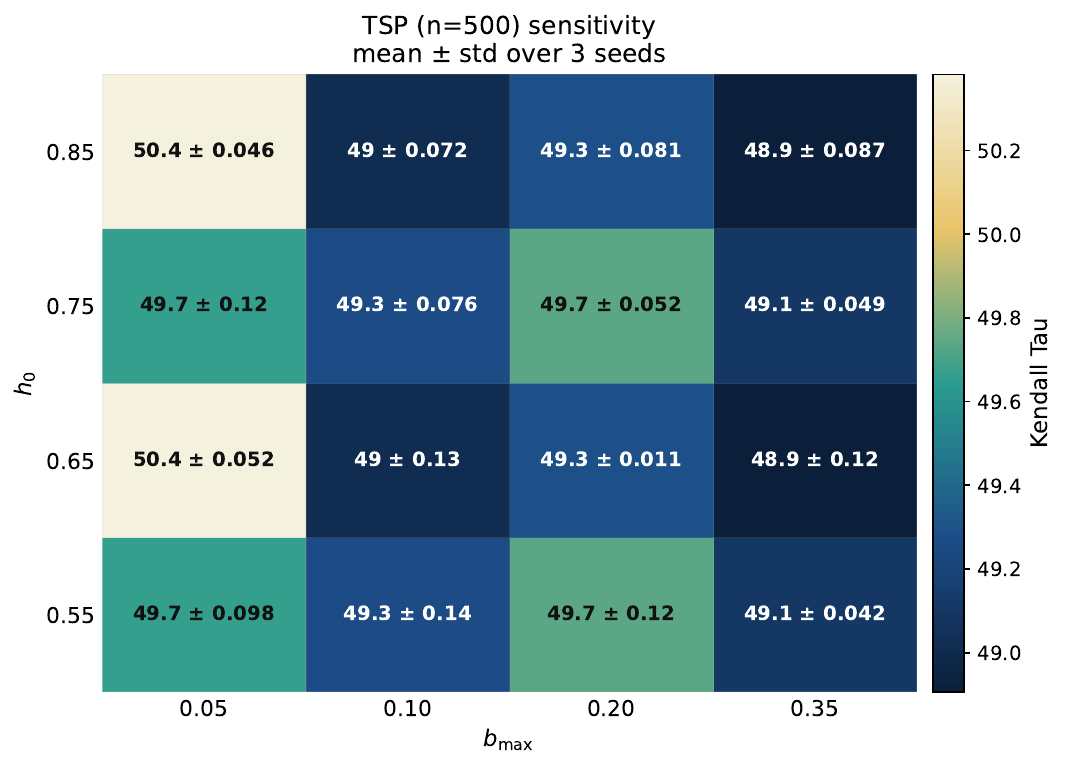}
    \caption{
    Sensitivity of entropy-adaptive temperature control to the entropy threshold $H_0$ and maximum boost $b_{\max}$ on unsupervised TSP ($n=500$).
    Values indicate the test evaluation tour length; all configurations converge without failures.
    }
    \label{fig:sensitivity_h0_bmax_tsp}
\end{figure}

\section{On Differentiable Sorting Operators as Unsupervised Baselines}
\label{app:sorting-baselines}

Differentiable sorting operators such as NeuralSort and SoftSort provide continuous relaxations of the \texttt{argsort} operation by construction.
These methods assume that the task can be solved by predicting a scalar key for each element and applying a (soft) sorting operator to those keys.
As a result, the permutation is not learned as a latent object, but is instead fully determined by the induced scalar ordering.

This inductive bias is well suited to supervised or strongly guided settings, where the correct ordering is directly specified or can be inferred reliably from labels.
In contrast, our setting treats the permutation as a latent operator and relies only on weak structural supervision defined on the reordered output (e.g., monotonicity violations).
Under such objectives, the scalar-ordering assumption fundamentally alters the learning dynamics.
If a simple projection suffices to approximately satisfy the structural constraint, the sorting operator enforces an ordering early and suppresses further learning.
Conversely, when no clear scalar ordering aligns with the objective, optimization tends to collapse toward nearly uniform or arbitrary scores, yielding degenerate or unstable permutations.

This behavior is reflected empirically by the trained NeuralSort results reported in Table~\ref{tab:sort}, where end-to-end optimization under the unsupervised monotonicity loss fails to recover meaningful permutations beyond very small problem sizes.
We emphasize that this outcome is not an implementation artifact, but a consequence of the operator-based formulation:
NeuralSort and SoftSort approximate sorting, whereas our approach learns permutations as latent variables whose structure emerges only from the consistency of the reordered data.

These observations highlight a conceptual distinction rather than a deficiency of prior work.
Operator-based sorting relaxations and latent permutation learning address different problem regimes, and the former is not designed to resolve heterogeneous uncertainty under weak unsupervised supervision. The same reasoning applies to SoftSort, which similarly embeds the sorting operation into the relaxation and therefore exhibits behavior analogous to that under weak unsupervised objectives.

\section{Additional Figures}
\label{app:extra-figs}

\begin{figure*}[t]
    \centering
    \includegraphics[width=\textwidth,
        trim=40 120 40 100,
        clip]{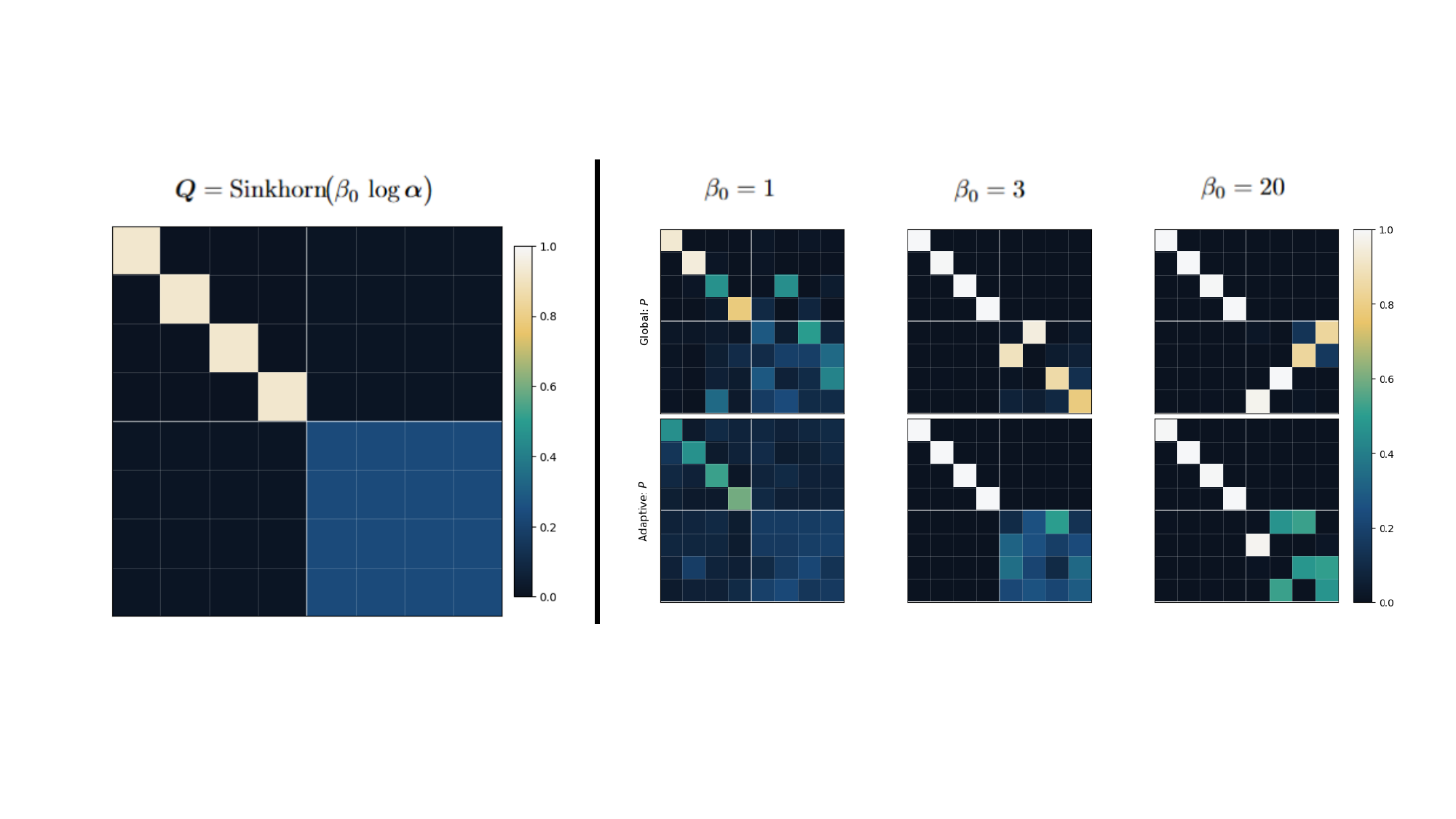}
    \caption{
    Illustration of global versus entropy-adaptive Gumbel--Sinkhorn on a block-ambiguous assignment problem.
    \textbf{Left:} Deterministic soft permutation
    $\mat{Q} = \mathrm{Sinkhorn}\!\bigl(\beta_0\,\mat{S}\bigr)$
    with base inverse temperature $\beta_0 = 1$.
    \textbf{Right:} Samples under increasing base inverse temperature
    $\beta_0 \in \{1,\; 3,\; 20\}$.
    Global inverse temperature sharpens all entries uniformly; entropy-adaptive control preserves exploration in ambiguous regions while discretizing confident ones.
    }
    \label{fig:entropy_adaptive_mechanism}
\end{figure*}

\section{Details of Entropy-Adaptive Gumbel--Sinkhorn}
\label{app:adaptive-details}

When computing entropies for the controller, we avoid evaluating $\log 0$ by clamping the deterministic soft assignment
$\mat{Q}=\mathrm{Sinkhorn}\!\left(\beta_0(t)\,\mat{S}\right)$ entrywise:
\[
\tilde{Q}_{ij} = \max(Q_{ij}, \varepsilon), \qquad \varepsilon=10^{-8}.
\]
Entropies are computed using $\tilde{\mat{Q}}$. This does not affect the Sinkhorn constraints and is used only for stable entropy estimation.

In addition We clamp entrywise temperatures to
$
\beta_{ij} \leftarrow
\mathrm{clip}\!\left(
\beta_{ij},\,
\frac{\beta_0(t)}{1+b_{\max}},\,
\beta_0(t)
\right),
$
to ensure numerical stability.

\section{Adaptive inverse temperature as local cost scaling}
\label{app:local-cost-scaling}

Let $\mathcal{B}_n$ denote the Birkhoff polytope of doubly stochastic matrices and let $\mat{C}=-\mat{S}$ denote costs.
Ignoring Gumbel noise for clarity, Sinkhorn with global inverse temperature $\beta$ corresponds to the entropically regularized assignment problem
\begin{equation*}
\mat{P}_{\beta}
=
\arg\min_{\mat{P}\in\mathcal{B}_n}
\;\;
\langle \mat{C},\mat{P}\rangle
+
\varepsilon \sum_{i,j} P_{ij}\log P_{ij},
\quad \varepsilon=\frac{1}{\beta}.
\label{eq:global-entropic}
\end{equation*}
Larger $\beta$ (smaller $\epsilon$) yields sharper (lower-entropy) solutions, while smaller $\beta$ encourages more diffuse assignments.

In what follows, we fix the entropy regularization weight to one and interpret inverse temperature purely as cost scaling.
Under this convention, the global inverse temperature $\beta$ corresponds to the objective
$\langle \mat{C},\mat{P}\rangle + \tfrac{1}{\beta}\sum_{i,j} P_{ij}\log P_{ij}$,
while an entrywise inverse-temperature field $\mat{\beta}$ corresponds to locally rescaled costs
$\mat{C}\mapsto \mat{\beta}\odot\mat{C}$ with a fixed entropy term.

Our adaptive formulation replaces the global inverse temperature with an entrywise field
$\mat{\beta}\in\mathbb{R}^{n\times n}_{+}$ by applying Sinkhorn normalization to
$\mat{\beta}\odot\mat{S}$ (equivalently $-\mat{\beta}\odot\mat{C}$).
This corresponds to solving an entropically regularized assignment problem with locally rescaled costs:
\begin{equation}
\mat{P}_{\mat{\beta}}
=
\arg\min_{\mat{P}\in\mathcal{B}_n}
\;\;
\big\langle \mat{\beta}\odot \mat{C},\,\mat{P}\big\rangle
+
\sum_{i,j} P_{ij}\log P_{ij}.
\label{eq:local-cost-entropic}
\end{equation}

\begin{proposition}[Adaptive inverse temperature as local cost scaling]
\label{prop:local-cost-scaling}
Consider the deterministic Sinkhorn mapping applied to a score matrix $\mat{S}$ with an entrywise inverse-temperature field $\mat{\beta}$:
\[
\mat{P}
=
\mathrm{Sinkhorn}\!\left( \mat{\beta}\odot \mat{S} \right).
\]
Then $\mat{P}$ is the unique minimizer of the entropically regularized assignment problem~\eqref{eq:local-cost-entropic},
where $\mat{C}=-\mat{S}$.
\end{proposition}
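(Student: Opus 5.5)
The plan is to identify the limit of the Sinkhorn iteration with a Kullback--Leibler projection onto $\mathcal{B}_n$, and then read off that this projection is exactly the stated objective. Set $\mat{K}=\exp(\mat{\beta}\odot\mat{S})$ entrywise, so that $K_{ij}=\exp(-\beta_{ij}C_{ij})>0$ with $\mat{C}=-\mat{S}$. The proof has three steps.

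\textbf{Step 1: the Sinkhorn limit has scaling form.} Sinkhorn normalization alternately rescales rows and columns of $\mat{K}$. By the Sinkhorn--Knopp theorem (valid since every entry of $\mat{K}$ is positive), it converges to the unique doubly stochastic matrix of the form $\mat{P}=\mathrm{diag}(\mathbf{u})\,\mat{K}\,\mathrm{diag}(\mathbf{v})$ with $\mathbf{u},\mathbf{v}>0$. This limit is what $\mathrm{Sinkhorn}(\cdot)$ denotes, following \cite{adams2011ranking}.

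\textbf{Step 2: expand the objective of~\eqref{eq:local-cost-entropic}.} For $\mat{P}\in\mathcal{B}_n$,
\[
\mathrm{KL}(\mat{P}\,\|\,\mat{K})=\sum_{i,j}P_{ij}\log P_{ij}-\sum_{i,j}P_{ij}\log K_{ij}-\sum_{i,j}P_{ij}+\sum_{i,j}K_{ij}.
\]
Here $-\log K_{ij}=\beta_{ij}C_{ij}$. The term $\sum_{i,j}P_{ij}=n$ is fixed on $\mathcal{B}_n$, and $\sum_{i,j}K_{ij}$ does not depend on $\mat{P}$. So on $\mathcal{B}_n$, $\mathrm{KL}(\mat{P}\,\|\,\mat{K})$ equals $\langle\mat{\beta}\odot\mat{C},\mat{P}\rangle+\sum_{i,j}P_{ij}\log P_{ij}$ plus a constant, and the two problems have the same minimizers.

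\textbf{Step 3: existence, uniqueness, and the KKT link.}
\begin{itemize}
\item Existence: $\mathcal{B}_n$ is compact, and the objective is continuous on it once we use the convention $0\log 0=0$.
\item Uniqueness: $x\log x$ is strictly convex on $[0,\infty)$ and the cost term is linear, so the objective is strictly convex on the convex set $\mathcal{B}_n$.
\item Identification: the main point is to show the minimizer $\mat{P}^\star$ is the scaling-form matrix from Step 1. First, the minimizer is strictly positive, because the derivative of $x\log x$ tends to $-\infty$ as $x\to0^+$. Concretely, moving a small amount of mass toward the uniform matrix $\tfrac1n\mathbf{1}\mathbf{1}^\top$ strictly decreases the objective whenever some entry is zero.
\item With $\mat{P}^\star$ strictly positive, the sign constraints are inactive. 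The Lagrange conditions with multipliers $f_i,g_j$ for the row and column constraints read $\beta_{ij}C_{ij}+1+\log P^\star_{ij}-f_i-g_j=0$.
\item This gives $P^\star_{ij}=e^{f_i-1/2}\,K_{ij}\,e^{g_j-1/2}$, which is exactly the diagonal-scaling form.
\end{itemize}
Since the doubly stochastic matrix of that form is unique (Step 1), $\mat{P}^\star=\mathrm{Sinkhorn}(\mat{\beta}\odot\mat{S})$, which proves the proposition. The only delicate point is the positivity argument at the boundary of $\mathcal{B}_n$; the rest is routine.
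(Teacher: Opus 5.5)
Your proof is correct and follows essentially the same route as the paper: set $\mat{K}=\exp(\mat{\beta}\odot\mat{S})$, expand $\mathrm{KL}(\mat{P}\,\|\,\mat{K})$ on $\mathcal{B}_n$ to recover the locally rescaled entropic objective, and invoke strict convexity for uniqueness. The paper simply asserts by citation that the Sinkhorn output is the KL projection onto $\mathcal{B}_n$, whereas you prove this yourself from positivity of the minimizer, the Lagrange conditions giving the diagonal-scaling form, and Sinkhorn--Knopp uniqueness; your strict-convexity argument covers all of $\mathcal{B}_n$ rather than only its interior (as the paper states), which makes the uniqueness claim fully rigorous.
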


This follows directly from the interpretation of Sinkhorn normalization
as KL projection onto the Birkhoff polytope~\cite{cuturi2013sinkhorn}.
A proof is provided in Appendix~\ref{app:proof-local-cost}.

Because feasibility is enforced by global row and column constraints, local cost rescaling induces coupled changes across the assignment and cannot be reproduced by global or separable temperature parameters.

Local inverse-temperature scaling reduces relative cost contrast between competing assignments, delaying symmetry breaking in ambiguous regions while allowing confident regions to concentrate.

\section{Additional baseline geometry: OT4P}
\label{app:geom-baselines}

This appendix provides additional details for the OT4P comparison, including (i) the exact unsupervised setup, (ii) a targeted hyperparameter sweep, and (iii) a brief failure analysis explaining why OT4P is near chance in our unsupervised sorting regime.

\subsection{Experimental setup (unsupervised sorting)}
\textbf{Data.}
Training lists are sampled i.i.d.\ from a uniform distribution (either $[0,1]$ or $[10,11]$) and randomly permuted.
The ground-truth sorted order and permutation are stored for evaluation only.

\textbf{Backbone model.}
We use the same lightweight Conv1D stack as in the GS experiments, with a $1\times 1$ convolutional architecture that processes each element independently and outputs assignment scores $\mat{S}\in\mathbb{R}^{n\times n}$.

\textbf{Relaxation.}
OT4P maps $\mat{S}$ to a temperature-controlled orthogonal matrix which is used as a permutation-like operator~\cite{guo2024ot4p}.

\textbf{Training objective.}
We train OT4P under the same unsupervised monotonicity-violation objective used throughout:
a hinge-squared penalty on negative adjacent differences after applying the predicted permutation-like matrix to the input list.

\textbf{Evaluation.}
At test time, we extract a discrete permutation by row-wise $\arg\max$ of the OT4P output (with evaluation temperature set to $\tau=0$ in our implementation) and report Kendall $\tau$ against the ground-truth permutation.

\subsection{Hyperparameter sweep (fairness)}
To address concerns that OT4P performance may reflect under-tuning in weakly supervised settings, we ran a targeted sweep over the most sensitive OT4P controls while keeping the overall training protocol fixed (same data generator, objective, optimizer family, epochs, batch size, and seed protocol).
Specifically, we varied:
(i) the learning rate,
(ii) the OT4P temperature schedule (initial/final $\tau$ and schedule form),
and (iii) common numerical stabilizations for orthogonal-power parameterizations (e.g., restricting the $\tau$ range to avoid unstable intermediate regimes).

Table~\ref{tab:ot4p_sweep} reports the resulting Kendall $\tau$ values, aggregated over three seeds.
\cradd{OT4P is above chance only for very small $n$ and remains close to random-permutation performance for $n\ge 50$ under both value ranges.}

\begin{table}[H]
\centering
\small
\setlength{\tabcolsep}{6pt}
\renewcommand{\arraystretch}{1.1}
\caption{\cradd{OT4P unsupervised sorting sweep results. All runs use the same data generator and unsupervised objective as GS-based methods; Kendall $\tau$ is reported on the test set as mean $\pm$ standard deviation over three seeds.}}
\label{tab:ot4p_sweep}
\begin{tabular}{lccccc}
\toprule
Value range & $n=5$ & $n=10$ & $n=50$ & $n=100$ & $n=200$ \\
\midrule
\cradd{$[0,1]$} & \cradd{$0.157 \pm 0.119$} & \cradd{$0.202 \pm 0.013$} & \cradd{$0.007 \pm 0.010$} & \cradd{$0.000 \pm 0.014$} & \cradd{$0.002 \pm 0.009$} \\
\cradd{$[10,11]$} & \cradd{$0.068 \pm 0.074$} & \cradd{$0.041 \pm 0.028$} & \cradd{$0.011 \pm 0.017$} & \cradd{$0.014 \pm 0.012$} & \cradd{$0.002 \pm 0.008$} \\
\bottomrule
\end{tabular}
\end{table}

\subsection{Failure analysis: why OT4P is near chance under our unsupervised objective}
The near-chance behavior is consistent with two interacting limitations of this specific unsupervised sorting setup:

\paragraph{(1) The unsupervised monotonicity objective is weak and locally defined on i.i.d.\ inputs.}
The monotonicity-violation loss penalizes only adjacent inversions after reordering.
On i.i.d.\ lists, many distinct score patterns induce similar expected local violation penalties, producing weak and high-variance gradients.
As a result, the optimization often fails to settle on a consistent global ordering even when the relaxation sharpens.

\paragraph{(2) OT4P-specific numerical sensitivity under fractional matrix powers.}
In our experiments, intermediate temperatures occasionally produced complex-valued intermediates during fractional matrix powering, depending on the spectrum of the learned orthogonal matrices.
Standard real-valued workarounds (e.g., discarding imaginary components) can break the intended orthogonal structure and degrade gradient quality, which is particularly harmful under weak unsupervised signals.
Restricting the temperature range reduced these events but did not change the qualitative outcome in our setting.

\subsection{Interpretation}
Overall, we view OT4P's near-chance results here as evidence of a mismatch between a weak, local unsupervised sorting objective on i.i.d.\ inputs and an orthogonal-power relaxation.
This should not be interpreted as a general negative result about OT4P: we expect OT4P to be better suited to settings with stronger supervision, richer context-aware scoring architectures (e.g., attention-based comparisons), or objectives that provide more global ranking signals.

\section{Gumbel--Sinkhorn Operator}
\label{app:gumbel_sinkhorn}

The Gumbel--Sinkhorn operator \cite{mena2018gumbel_sinkhorn} provides a continuous, differentiable relaxation of sampling from the space of permutation matrices.
Given a matrix of assignment logits $\mat{S}\in\mathbb{R}^{n\times n}$, i.i.d.\ Gumbel noise $\mat{g}$ with entries $g_{ij}\sim\mathrm{Gumbel}(0,1)$ is added to obtain a perturbed score matrix $\mat{S}+\mat{g}$.
A temperature parameter $\tau>0$ is then applied, yielding
\[
\mat{Z} = \frac{\mat{S} + \mat{g}}{\tau}.
\]

The Sinkhorn operator $\mathrm{Sinkhorn}(\cdot)$ maps $\mat{Z}$ to an approximately doubly stochastic matrix by iteratively normalizing its rows and columns.
In practice, this is implemented in log-space as alternating row-wise and column-wise log-softmax operations, ensuring numerical stability and differentiability.
After a finite number of iterations, the resulting matrix
\[
\mat{P} = \mathrm{Sinkhorn}(\mat{Z})
\]
lies in the Birkhoff polytope and can be interpreted as a soft permutation.

The temperature parameter $\tau$ controls the entropy of the resulting assignment:
larger values of $\tau$ produce smoother, higher-entropy doubly stochastic matrices, while smaller values of $\tau$ sharpen the distribution and yield matrices that are increasingly close to discrete permutations.
Importantly, because $\tau$ is applied after adding the Gumbel noise, it jointly scales both the logits and the noise, thereby controlling the effective stochasticity of the relaxation.

During training, $\tau$ is typically annealed from a high to a low value according to a fixed schedule, enabling stable optimization with smooth assignments in early iterations and progressively more discrete permutation estimates as training converges.

\subsection{Additional ablations: smoothness loss and TA combination}
\label{app:ablations-details}
\paragraph{Smoothness loss variants.}
In jigsaw experiments, \texttt{smooth2k} enforces boundary consistency over a $2k$-pixel band rather than a single seam.
For each horizontal neighbor pair, we concatenate the rightmost $k$ columns of the left tile with the leftmost $k$ columns of the right tile and apply a total-variation penalty over the resulting strip; the same is done vertically.
For $k=1$, this reduces to a standard edge-to-edge smoothness loss.

\paragraph{Temperature adaptation combination.}
When entropy-adaptive temperature control is enabled, we compute normalized row and column entropies of the soft assignment.
Row and column temperatures are increased when entropy exceeds a threshold.
With \texttt{avg} (default), entrywise temperature is the arithmetic mean of row and column values.
With \texttt{prod}, temperatures are multiplied, yielding larger boosts when both row and column are uncertain and producing softer assignments.

\end{document}